\definecolor{shadecolor}{gray}{0.9}
\newcommand{\blue}[1]{\textcolor{MidnightBlue}{#1}}
\newcommand{\spm}[1]{{\textcolor{gray}{$\ \pm\ $#1}}}
\DeclareTextFontCommand{\emph}{\em}
\newtheorem{theorem}{Theorem}[section]
\newtheorem{lemma}[theorem]{Lemma}
\newtheorem{proposition}[theorem]{Proposition}
\def\randsearch{\texttt{rand\! search}}
\def\bayesopt{\texttt{Bayes\! opt}}
\def\stn{\texttt{STN}}
\def\deepens{\texttt{deep\! ens}}
\def\batchens{\texttt{batch\!~ens}}
\def\hyperens{\texttt{fixed\!~init\!~hyper\!~ens}}
\def\batchhyperens{\blue{\texttt{hyper-batch\!~ens}}}
\def\strathyperens{\blue{\texttt{hyper-deep\!~ens}}}
\def\single{\texttt{single}}
\def\rb{{\mathbf r}}
\def\xb{{\mathbf x}}
\def\bb{{\mathbf b}}
\def\wb{{\mathbf w}}
\def\hb{{\mathbf h}}
\def\sbb{{\mathbf s}}
\def\eb{{\mathbf e}}
\def\ub{{\mathbf u}}
\def\vb{{\mathbf v}}
\def\gb{{\mathbf g}}
\def\xib{{\boldsymbol\xi}}
\def\zetab{{\boldsymbol\zeta}}
\def\deltab{{\boldsymbol\delta}}
\def\Deltab{{\boldsymbol\Delta}}
\def\thetab{{\boldsymbol\theta}}
\def\lambdab{{\boldsymbol\lambda}}
\def\oneb{{\mathbf 1}}
\def\zerob{{\mathbf 0}}
\def\Xb{{\mathbf X}}
\def\Hb{{\mathbf H}}
\def\Wb{{\mathbf W}}
\def\Ib{{\mathbf I}}
\def\Gb{{\mathbf G}}
\def\Cb{{\mathbf C}}
\def\Ub{{\mathbf U}}
\def\Kb{{\mathbf K}}
\def\Sigmab{{\boldsymbol\Sigma}}
\def\Lambdab{{\boldsymbol\Lambda}}
\def\Thetab{{\boldsymbol\Theta}}
\def\Real{{\mathbb{R}}}
\def\Ocal{\mathcal{O}}
\def\Hcal{\mathcal{H}}
\def\Ncal{\mathcal{N}}
\def\Acal{\mathcal{A}}
\def\Dcal{\mathcal{D}}
\def\Ecal{\mathcal{E}}
\def\Pcal{\mathcal{P}}
\def\Mcal{\mathcal{M}}
\def\Scal{\mathcal{S}}
\def\Lcal{\mathcal{L}}
\def\Qcal{\mathcal{Q}}
\def\trace{{\mathrm{Tr}}}
\DeclareMathOperator*{\argmin}{arg\,min}
\def\Exp{{\mathbb{E}}}
\title{Hyperparameter Ensembles for~\\Robustness and Uncertainty Quantification}
\author{%
  Florian Wenzel,\ \ Jasper Snoek,\ \ Dustin Tran,\ \ Rodolphe Jenatton\\
   Google Research \\
   \texttt{$\{$florianwenzel, jsnoek, trandustin, rjenatton$\}$@google.com}\\
}
\begin{document}

\maketitle

\begin{abstract}
Ensembles over neural network weights trained from different random initialization, known as deep ensembles, achieve state-of-the-art accuracy and calibration. The recently introduced batch ensembles provide a drop-in replacement that is more parameter efficient.
In this paper, we design ensembles not only over weights, but over hyperparameters to improve the state of the art in both settings. 
For best performance independent of budget, we propose \textit{hyper-deep ensembles}, a simple procedure that involves a random search over different hyperparameters, themselves stratified across multiple random initializations.
Its strong performance highlights the benefit of combining models with both weight \textit{and} hyperparameter diversity.
We further propose a parameter efficient version, \textit{hyper-batch ensembles}, which builds on the layer structure of batch ensembles and self-tuning networks. The computational and memory costs of our method are notably lower than typical ensembles.
On image classification tasks, with MLP, LeNet, ResNet 20 and Wide ResNet 28-10 architectures, we improve upon both deep and batch 
ensembles.
\end{abstract}

\section{Introduction}
\begin{wrapfigure}{r}{0.43\textwidth}
\vspace{-1cm}
\resizebox{0.42\textwidth}{!}{
\includegraphics{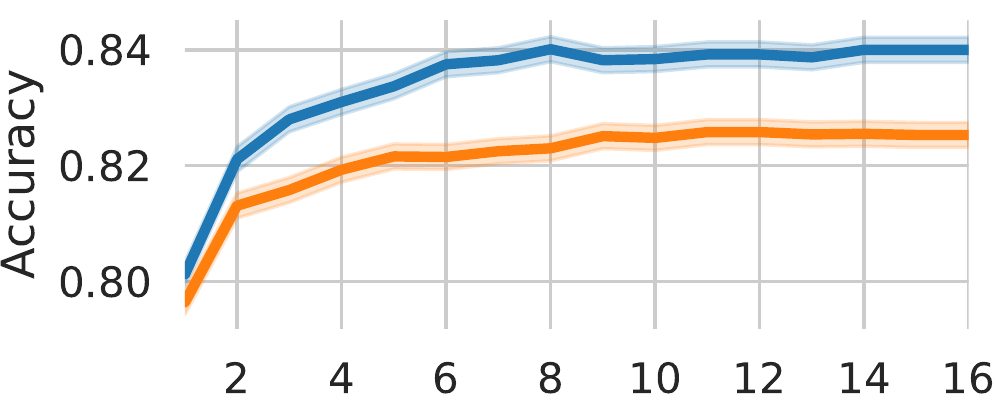}}
\resizebox{0.42\textwidth}{!}{
\includegraphics{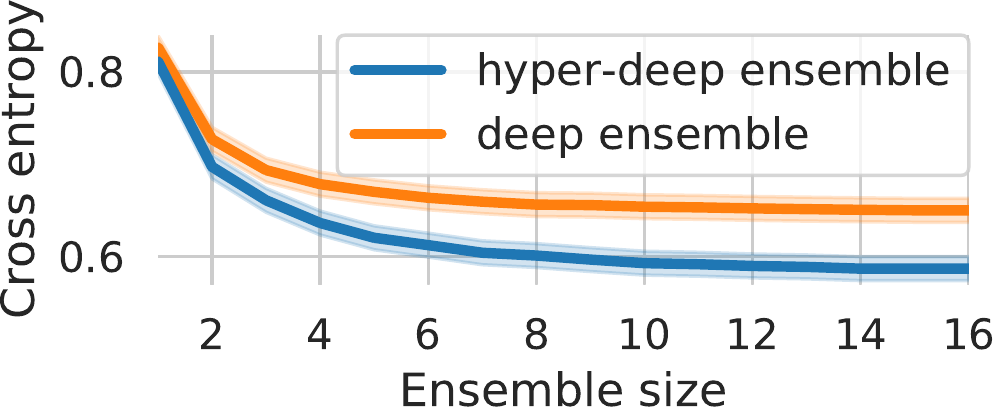}}
\caption{Comparison of our hyper-deep ensemble with deep ensemble for different ensemble sizes
using a Wide ResNet 28-10 over CIFAR-100. Combining models with different hyperparameters is beneficial.}%
\label{fig:str_hyper_ens_cifar100}%
\vspace{-0.0cm}%
\end{wrapfigure}%
Neural networks are well-suited to form \textit{ensembles} of models~\cite{hansen1990neural}.
Indeed, neural networks trained from different random initialization can lead to equally well-performing models that are nonetheless \textit{diverse} in that they make complementary errors on held-out data~\cite{hansen1990neural}.
This property is explained by the multi-modal nature of their loss landscape~\cite{ge2015escaping} and the randomness induced by both their initialization and the stochastic methods commonly used to train them~\cite{Bottou1998, Kingma2014, bottou2018optimization}.

Many mechanisms have been proposed to further foster diversity in ensembles of neural networks, e.g., based on cyclical learning rates~\cite{huang2017snapshot} or Bayesian analysis~\cite{dusenberry2020efficient}. In this paper, we focus on exploiting the diversity induced by combining neural networks defined by different hyperparameters. This concept is already well-established~\cite{Caruana2004} and 
the auto-ML community actively applies it~\cite{Feurer2015a, Snoek2015, Mendoza2016, levesque2016bayesian}.
We build upon this research with the  following two complementary goals. 

First, for performance independent of computational and memory budget, we seek to improve upon \textit{deep ensembles}~\cite{Lakshminarayanan2017}, the current state-of-the-art ensembling method in terms of robustness and uncertainty quantification~\cite{snoek2019can, gustafsson2019evaluating}.
To this end, we develop a simple stratification scheme which combines random search and the greedy selection of hyperparameters from~\cite{Caruana2004} with the benefit of multiple random initializations per hyperparameter like in deep ensembles.
\Cref{fig:str_hyper_ens_cifar100} illustrates our algorithm for a Wide ResNet 28-10 where it leads to substantial improvements, highlighting the benefits of combining different initialization \textit{and} hyperparameters.

Second, we seek to improve upon \textit{batch ensembles}~\cite{wen2020batchensemble}, the current state-of-the-art in efficient ensembles. 
To this end, we propose a 
parameterization combining that of~\cite{wen2020batchensemble} and self-tuning networks~\cite{mackay2019self}, which enables both weight and hyperparameter diversity. Our approach is a drop-in replacement that outperforms batch ensembles and does not need a separate tuning of the hyperparameters.

\subsection{Related work}
    \textbf{Ensembles over neural network weights.} Combining the outputs of several neural networks to improve their single performance has a long history, e.g.,~\cite{levin1990statistical, hansen1990neural, geman1992neural, krogh1995neural, opitz1999popular, dietterich2000ensemble}. Since the quality of an ensemble hinges on the diversity of its members~\cite{hansen1990neural}, many mechanisms were developed to generate diverse ensemble members. For instance, cyclical learning-rate schedules can explore several local minima~\cite{huang2017snapshot, zhang2019cyclical} where ensemble members can be snapshot. Other examples are MC dropout~\cite{gal2016dropout} or the random initialization itself, possibly combined with the bootstrap~\cite{lee2015m, Lakshminarayanan2017}. More generally, Bayesian neural networks can be seen as ensembles with members being weighted by the (approximated) posterior distribution over the parameters~\cite{hinton1993keeping, mackay1995ensemble, neal1995bayesian, blundell2015weight, wenzel2020good, wilson2020bayesian}.

    \textbf{Hyperparameter ensembles.} Hyperparameter-tuning methods~\cite{feurer2019hyperparameter} typically produce a pool of models from which ensembles can be constructed post hoc, e.g.,~\cite{Snoek2015}. This idea has been made systematic as part of \texttt{auto-sklearn}~\cite{Feurer2015a} and successfully exploited in several other contexts, e.g.,~\cite{Feurer2018a} and specifically for neural networks~\cite{Mendoza2016} as well as in computer vision~\cite{saikia2020optimized} and genetics~\cite{Hollerer2020}. In particular, the greedy ensemble construction from~\cite{Caruana2004} (and later variations thereof~\cite{Caruana2006}) was shown to work best among other algorithms, either more expensive or more prone to overfitting.
To the best of our knowledge, such ensembles based on hyperparameters have not been studied in the light of predictive uncertainty. Moreover, we are not aware of existing methods to efficiently build such ensembles, similarly to what batch ensembles do for deep ensembles. Finally, recent research in Bayesian optimization has also focused on directly optimizing the performance of the ensemble while tuning the hyperparameters~\cite{levesque2016bayesian}.

Hyperparameter ensembles also connect closely to probabilistic models over structures. These works often analyze Bayesian nonparametric distributions, such as over depth and width of a neural network, leveraging Markov chain Monte Carlo for inference \cite{kemp2008discovery,adams2010learning,duvenaud2013structure,lake2015human}. In this work, we examine more parametric assumptions, building on the success of variational inference and mixture distributions: for example, the validation step in hyper-batch ensemble can be viewed as a mixture variational posterior and the entropy penalty is the ELBO’s KL divergence toward a uniform prior.

Concurrent to our paper, \cite{zaidi2020neural} construct neural network ensembles within the context of neural architecture search, showing improved robustness for predictions with distributional shift.
One of their methods, NES-RS, has similarities with our hyper-deep ensembles (see \Cref{sec:hyperparameter_ensemble}), also relying on both random search and~\cite{Caruana2004} to form ensembles, but do not stratify over different initializations. We vary the hyperparameters while keeping the architecture fixed while \cite{zaidi2020neural} study the converse.
Furthermore, \cite{zaidi2020neural} do not explore a parameter- and computationally-efficient method (see~\Cref{sec:batch_hyperparameter_ensemble}).

\textbf{Efficient hyperparameter tuning \& best-response function.} Some hyperparameters of a neural network, e.g., its $L_2$ regularization parameter(s), can be optimized by estimating the \textit{best-response} function~\cite{gibbons1992primer}, i.e., the mapping from the hyperparameters to the parameters of the neural networks solving the problem at hand~\cite{brock2017smash}. Learning this mapping is an instance of learning an hypernetwork~\cite{schmidhuber1992learning, schmidhuber1993self, ha2016hypernetworks} and falls within the scope of bilevel optimization problems~\cite{colson2007overview}.
Because of the daunting complexity of this mapping, \cite{lorraine2018stochastic, mackay2019self} proposed scalable local approximations of the best-response function. Similar methodology was also employed for style transfer and image compression~\cite{babaeizadeh2018adjustable, Dosovitskiy2020You}.
The \textit{self-tuning networks} from~\cite{mackay2019self} are an important building block of our approach wherein we extend their setting to the case of an ensemble over different hyperparameters.

\subsection{Contributions}
We examine two regimes to exploit hyperparameter diversity: \textbf{(a)} ensemble performance independent of budget and \textbf{(b)} ensemble performance seeking parameter \textit{efficiency}, where, respectively, deep and batch ensembles \cite{Lakshminarayanan2017, wen2020batchensemble} are state-of-the-art. We propose one ensemble method for each regime:

\textbf{(a) Hyper-deep ensembles.} We define a greedy algorithm to form ensembles of neural networks exploiting two sources of diversity: varied hyperparameters and random  initialization.
By stratifying models with respect to the latter, our algorithm subsumes
deep ensembles that we outperform in our experiments. Our approach is a simple, strong baseline that we hope will be used in future research.

\textbf{(b) Hyper-batch ensembles.} We efficiently construct ensembles of neural networks defined over different hyperparameters. Both the ensemble members \textit{and} their hyperparameters are learned end-to-end in a single training procedure, directly maximizing the ensemble performance. Our approach outperforms batch ensembles and generalizes the layer structure of~\cite{mackay2019self} and~\cite{wen2020batchensemble}, while keeping their original memory compactness and efficient minibatching for parallel training and prediction.

We illustrate the benefits of our two ensemble methods on image classification tasks, with multi-layer perceptron, LeNet, 
ResNet 20 and Wide ResNet 28-10 architectures, in terms of both predictive performance and uncertainty. 
The code for generic hyper-batch ensemble layers can be found in \url{https://github.com/google/edward2} and
the code to reproduce the experiments of \Cref{sec:experiments_resnet} is part of \url{https://github.com/google/uncertainty-baselines}.

\section{Background}

We introduce notation and background required to define our approach.
Consider an i.i.d.~classification setting with data $\Dcal = \{(\xb_n, y_n)\}_{n=1}^N$ where $\xb_n\in \Real^d$ is the feature vector corresponding to the $n$-th example and $y_n$ its class label.
We seek to learn a classifier in the form of a neural network $f_\thetab$ where all its parameters (weights and bias terms) are summarized in $\thetab \in \Real^p$. In addition to its primary parameters $\thetab$, the model $f_\thetab$ will also depend on $m$ hyperparameters that we refer to as $\lambdab \in \Real^m$. For instance, an entry in $\lambdab$ could correspond to the dropout rate of a given layer in $f_\thetab$.

Equipped with some loss function $\ell$, e.g., the cross entropy, and some regularization term $\Omega(\cdot, \lambdab)$, e.g., the squared $L_2$ norm with a strength defined by an entry of $\lambdab$, we are interested in
\begin{equation}\label{eq:map}
    \hat{\thetab}(\lambdab) \in \argmin_{\thetab \in \Real^p}\  
    \Exp_{(\xb, y) \in \Dcal}\big [
    \Lcal(\xb, y, \thetab, \lambdab)
    \big]
    \ \ \ \text{with}\ \ \
    \Lcal(\xb, y, \thetab, \lambdab) = \ell(f_\thetab(\xb, \lambdab), y) + \Omega(\thetab, \lambdab),
\end{equation}
where $\Exp_{(\xb, y) \in \Dcal}[\cdot]$ stands for the expectation with a uniform distribution over $\Dcal$.
As we shall see in \Cref{sec:experiments}, the loss $\ell=\ell_\lambdab$ can also depend on $\lambdab$, for instance to control a label smoothing parameter~\cite{Szegedy2016}. In general, $\lambdab$ is chosen based on some held-out evaluation metric by grid search, random search~\cite{Bergstra2012} or more sophisticated hyperparameter-tuning methods~\cite{feurer2019hyperparameter}.

\subsection{Deep ensembles and batch ensembles}\label{sec:background_deep_ens_and_batch_ens}

Deep ensembles~\cite{Lakshminarayanan2017} are a simple ensembling method where neural networks with different random initialization are combined.
Deep ensembles lead to remarkable predictive performance and robust uncertainty estimates~\cite{snoek2019can, gustafsson2019evaluating}.
Given some hyperparameters $\lambdab_0$, a deep ensemble of size $K$ amounts to solving $K$ times (\ref{eq:map}) with random initialization and aggregating the outputs of $\{f_{\hat{\thetab}_k(\lambdab_0)}(\cdot, \lambdab_0)\}_{k=1}^K$.

Batch ensembles~\cite{wen2020batchensemble} are a state-of-the-art \textit{efficient} alternative to deep ensembles, preserving their performance while reducing their computational and memory burden.
To simplify the presentation, we focus on the example of a dense layer in $f_\thetab$, with weight matrix $\Wb \in \Real^{r \times s}$ where $r$ and $s$ denote the input and output dimensions of the layer respectively.

A deep ensemble of size $K$ needs to train, predict with, and store $K$ weight matrices $\{\Wb_k\}_{k=1}^K$. Instead, batch ensembles consider a \textit{single} matrix $\Wb \in \Real^{r \times s}$ together with two sets of auxiliary vectors $[\rb_1,\dots, \rb_K] \in \Real^{r \times K}$ and $[\sbb_1,\dots, \sbb_K] \in \Real^{s \times K}$ such that the role of $\Wb_k$ is played by
\begin{equation}\label{eq:batch_ensemble}
    \Wb \circ (\rb_k \sbb_k^\top)\ \ \text{for each}\ \ k \in \{1,\dots, K\},
\end{equation}
where we denote by $\circ$ the element-wise product (which we will broadcast row-wise or column-wise depending on the shapes at play). Not only does (\ref{eq:batch_ensemble}) lead to a memory saving, but it also allows for efficient minibatching, where each datapoint may use a different ensemble member. Given a batch of inputs $\Xb \in \Real^{b \times r}$,
the predictions for the $k$-th member equal $\Xb [\Wb \circ (\rb_k \sbb_k^\top)] = [(\Xb \circ \rb_k^\top) \Wb] \circ \sbb_k^\top$. By properly tiling the batch $\Xb$, the $K$ members 
can thus predict in parallel in one forward pass~\cite{wen2020batchensemble}.

\subsection{Self-tuning networks}\label{sec:background_stn}
Hyperparameter tuning typically involves multiple runs of the training procedure.
One efficient alternative \cite{lorraine2018stochastic,mackay2019self} is to approximate the best-response function, i.e., the mapping from $\lambdab$ to optimal parameters $\hat{\thetab}(\lambdab)$.
The local approximation of \cite{mackay2019self} captures the changes of $\lambdab$ by scaling and shifting the hidden units of $f_\thetab$, which requires in turn extra parameters  $\thetab' \in \Real^{p'}$, summarized in $\Thetab = \{\thetab, \thetab'\}$. \cite{mackay2019self} call the resulting approach \textit{self-tuning network} since $f_\Thetab$ tunes online its own hyperparameters $\lambdab$.
In the sequel, $\lambdab$ will be continuous such as dropout rates, $L_2$ penalties and label smoothing.

\paragraph{Example of the dense layer.} We illustrate the choice and role of $\thetab'$ in the example of a dense layer (the convolutional layer is similar to~\cite{perez2018film}; see details in~\cite{mackay2019self}). The weight matrix $\Wb \in \Real^{r \times s}$ and bias $\bb \in \Real^s$ of a dense layer are defined as (with $\Deltab$ and $\deltab$ of the same shapes as $\Wb$ and $\bb$ respectively),
\begin{equation}\label{eq:dense_layer_stn}
\Wb(\lambdab) = \Wb + \Delta \circ \eb(\lambdab)^\top 
\ \ \text{and}\ \
\bb(\lambdab) = \bb + \deltab \circ \eb'(\lambdab), 
\end{equation}
where $\eb(\lambdab) \in \Real^s$ and $\eb'(\lambdab) \in \Real^s$ are real-valued embeddings of $\lambdab$.
In~\cite{mackay2019self}, the embedding is linear, i.e., $\eb(\lambdab)=\Cb \lambdab$ and $\eb'(\lambdab)=\Cb' \lambdab$. In this example, we have original parameters $\thetab =\{\Wb, \bb\}$ as well as the additional parameters $\thetab' = \{\Deltab, \deltab, \Cb, \Cb'\}$.
\begin{figure}[t]
\vspace{-0.0cm}
\resizebox{\textwidth}{!}{
\includegraphics[scale=0.38]{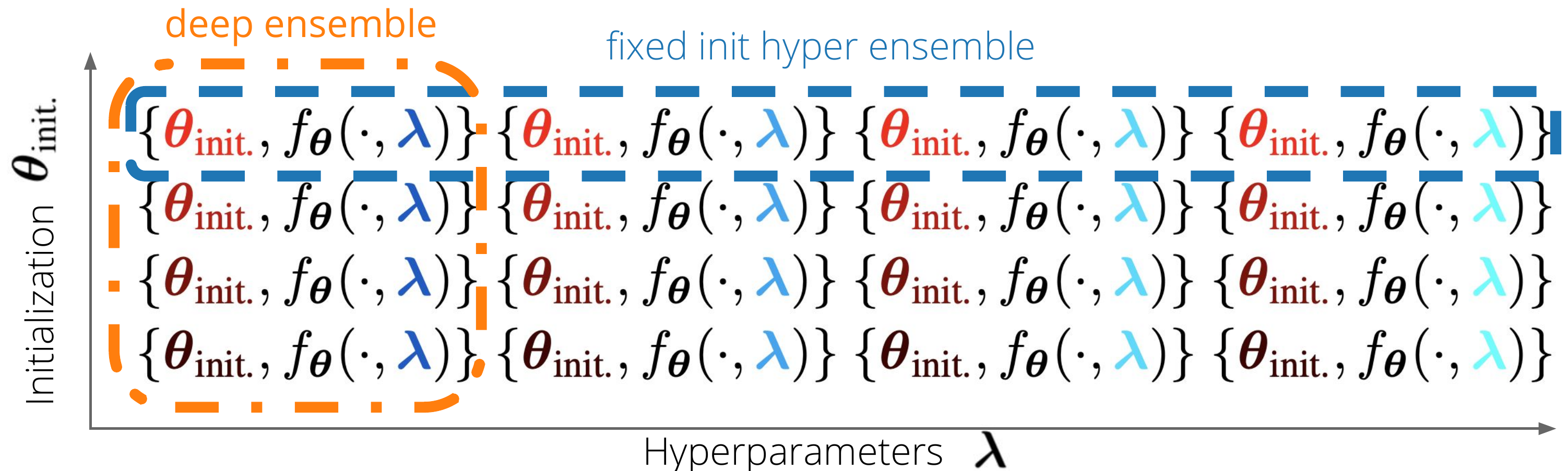}
\includegraphics[scale=0.46]{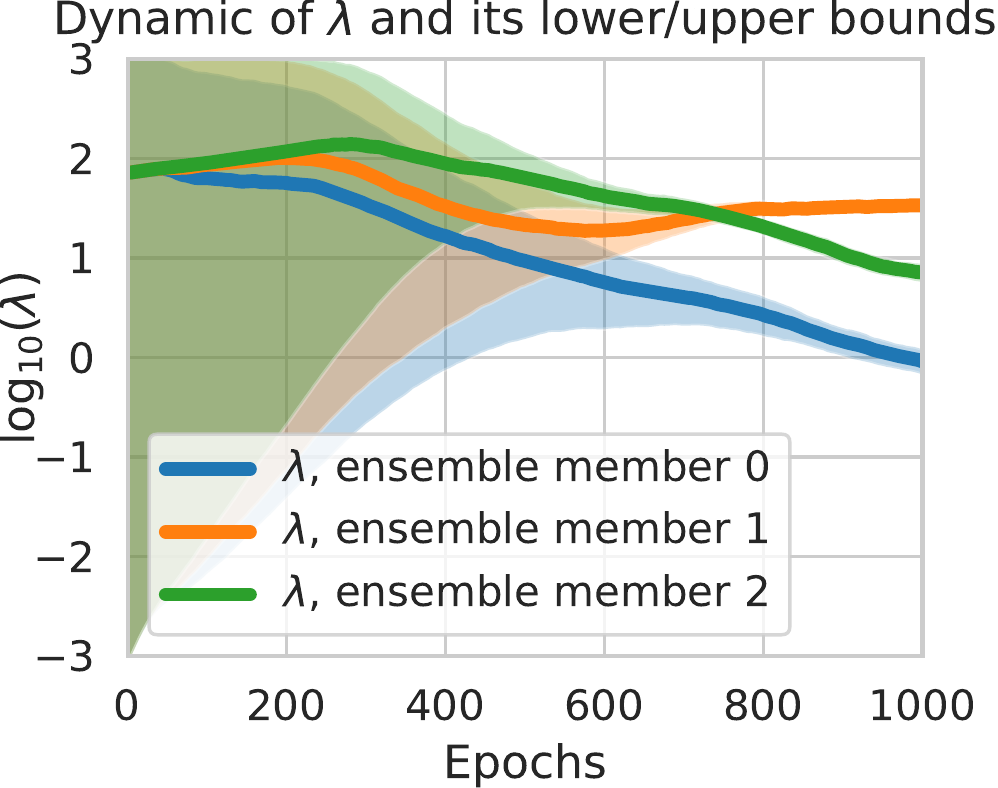}
}
\vspace{-0.5cm}%
\caption{\textsc{left:} Pictorial view of deep ensemble (``column'') and fixed init hyper ensemble (``row'') for models $f_\thetab(\cdot,\lambdab)$ with parameters $\thetab$ and hyperparameters $\lambdab$. Our new method \textit{hyper-deep ensemble} can search in the whole ``block'', exploiting both initialization and hyperparameter diversity. \textsc{right:} Example of the optimization path of \textit{hyper-batch ensemble} for an entry of the hyperparameters $\lambdab$ (the $L_2$ parameter of an MLP over CIFAR-100) with its upper/lower bounds (shaded regions).
The lower/upper bounds of the three members converge to a diverse set of hyperparameters.
}%
\label{fig:initialization_vs_lambdas_and_example_trajectory}%
\vspace{-0.0cm}%
\end{figure}%
\paragraph{Training objective.} Since $\thetab'$ captures changes in $\thetab$ induced by changes in $\lambdab$, \cite{lorraine2018stochastic, mackay2019self} replace the typical objective (\ref{eq:map}), defined for a \textit{single} value of $\lambdab$, with an \textit{expected} objective \cite{lorraine2018stochastic, mackay2019self,Dosovitskiy2020You},
\begin{equation}\label{eq:global_stn}
    \min_{\Thetab \in \Real^{p+p'}}
\Exp_{\substack{\lambdab \sim p(\lambdab), (\xb, y) \in \Dcal}} \big[
\Lcal(\xb, y, \Thetab, \lambdab)
\big],
\end{equation}
where $p(\lambdab)$ denotes some distribution over the hyperparameters $\lambdab$. When $p$ is kept fixed during the optimization of~(\ref{eq:global_stn}), the authors of~\cite{lorraine2018stochastic} observed that $\hat{\thetab}(\lambdab)$ is not well approximated and proposed instead to use a distribution $p_t(\lambdab)=p(\lambdab|\xib_t)$ varying with the iteration $t$.
In our work we choose $p(\cdot|\xib_t)$ to be a log-uniform distribution with $\xib_t$ containing the bounds of the ranges of $\lambdab$ (see \Cref{sec:batch_hyperparameter_ensemble}). 
The key benefit from~(\ref{eq:global_stn}) is that a \textit{single} (though, more costly) training gives access to a mapping $\lambdab \mapsto f_{\hat{\Thetab}}(\cdot, \lambdab)$ which approximates the behavior of $f_{\hat{\Thetab}}$ for hyperparameters in the support of $p(\lambdab)$.

\paragraph{Alternating optimization.} The procedure followed by~\cite{mackay2019self} consists in alternating between training and tuning steps. First, the training step performs a stochastic gradient update of $\Thetab$ in~(\ref{eq:global_stn}), jointly sampling $\lambdab\sim p(\lambdab|\xib_t)$ and $(\xb, y) \in \Dcal$. Second, the tuning step makes a stochastic gradient update of $\xib_t$ by minimizing some \textit{validation} objective (e.g., the cross entropy):
\begin{equation}\label{eq:validation_loss}
\min_{\xib_t}
\Exp_{\substack{\lambdab \sim p(\lambdab|\xib_t), (\xb, y) \in \Dcal_\text{val}}} \big[ \ell_\text{val}(f_\Thetab(\xb, \lambdab), y)\big].
\end{equation}
In~(\ref{eq:validation_loss}), derivatives are taken through samples $\lambdab\sim p(\lambdab|\xib_t)$ by applying the reparametrization trick~\cite{kingma2013auto}.
To prevent $p(\lambdab|\xib_t)$ from collapsing to a degenerate distribution, and inspired by variational inference, the authors of~\cite{mackay2019self} add an entropy regularization term $\Hcal[\cdot]$ controlled by $\tau \geq 0$ so that~(\ref{eq:validation_loss}) becomes
\begin{equation}\label{eq:validation_loss_and_entropy}
\min_{\xib_t}
\Exp_{\substack{\lambdab \sim p(\lambdab|\xib_t), (\xb, y) \in \Dcal_\text{val}}} \big[  \ell_\text{val}(f_\Thetab(\xb, \lambdab), y) - \tau \Hcal[p(\lambdab|\xib_t)]
\big].
\end{equation}
\section{Hyper-deep ensembles}\label{sec:hyperparameter_ensemble}

\Cref{fig:initialization_vs_lambdas_and_example_trajectory}-(left) visualizes different models $f_\thetab(\cdot, \lambdab)$ according to their hyperparameters $\lambdab$ along the $x$-axis and their initialization $\thetab_\text{init.}$ on the $y$-axis.
In this view, a deep ensemble corresponds to a ``column'' where models with different random initialization are combined together, for a fixed $\lambdab$. On the other hand, a ``row'' corresponds to the combination of models with different hyperparameters. Such a ``row'' typically stems from the application of some hyperparameter-tuning techniques~\cite{feurer2019hyperparameter}.

\paragraph{Fixed initialization hyper ensembles.} Given the simplicity, broad applicability, and performance of the greedy algorithm from~\cite{Caruana2004}---e.g., in auto-ML settings~\cite{Feurer2015a}, we use it as our canonical procedure to generate a ``row'', i.e., an ensemble of neural networks with fixed parameter initialization and various hyperparameters. We refer to it as \textit{fixed init hyper ensemble}. For completeness, we recall the procedure from~\cite{Caruana2004} in~\Cref{sec:supp_mat_details_about_stratified_hyper_ens} (Algorithm~\ref{alg:hyper_ens}, named \texttt{hyper\_ens}). Given an input set of models (e.g., from random search), \texttt{hyper\_ens} greedily grows an ensemble until some target size $K$ is met by selecting the model with the best improvement of some score, e.g., the validation log-likelihood. We select the models \textit{with replacement} to be able to learn weighted combinations thereof (see Section~2.1 in~\cite{Caruana2004}).
Note that the procedure from~\cite{Caruana2004} does not require the models to have a fixed initialization: we consider here a fixed initialization to isolate the effect of just varying the hyperparameters (while deep ensembles vary only the initialization, with fixed hyperparameters).

Our goal is two-fold: (a) we want to demonstrate the complementarity of random initialization and hyperparameters  as sources of diversity in the ensemble, and (b) 
design a simple algorithmic scheme that exploits both sources of diversity while encompassing the construction of deep ensembles as a subcase.
We defer to~\Cref{sec:experiments} the study of (a) and next focus on (b).
\paragraph{Hyper-deep ensembles.} 
We proceed in three main steps, as summarized in Algorithm~\ref{alg:stratified_hyper_ens}. In lines 1-2, we first generate one ``row'' according to \texttt{hyper\_ens} based on the results of random search~\cite{Bergstra2012} as input. We then tile and stratify that ``row'' by training the models for different random initialization (see lines 4-7).
The resulting set of models is illustrated in \Cref{fig:initialization_vs_lambdas_and_example_trajectory}-(left).
In line 10, we finally re-apply \texttt{hyper\_ens} on that stratified set of models to extract an ensemble that can exploit the two sources of diversity. By design, a deep ensemble is one possible outcome of this procedure---one ``column''---and so is \textit{fixed init hyper ensemble} described in the previous paragraph---one ``row''. 

\begin{wrapfigure}{r}{0.43\textwidth}
\hspace*{0.01cm}
\resizebox{0.42\textwidth}{!}{
\begin{minipage}{0.5\textwidth}
\vspace*{-0.8cm}
\begin{algorithm2e}[H]
\SetAlgoLined
 $\Mcal_0=\{f_{\thetab_j}(\cdot, \lambdab_j)\}_{j=1}^\kappa \!\xleftarrow{}$ \texttt{rand\_search}($\kappa$)\;
 $\Ecal_0 \xleftarrow{}$ \texttt{hyper\_ens}($\Mcal_0,\ K$)\ 
 and\ $\Ecal_\text{strat.} = \{\ \}$\;
 \ForEach{$f_{\thetab}(\cdot, \lambdab) \in \Ecal_0.\mathrm{{unique}()}$}{
    \ForEach{$k \in \{1,\dots, K\}$}{
        $\thetab' \xleftarrow{}$ random initialization\;
        $f_{\thetab_k}(\cdot, \lambdab) \xleftarrow{}$ train
        $f_{\thetab'}(\cdot, \lambdab)$\;
        $\Ecal_\text{strat.} = \Ecal_\text{strat.} \cup \{\ f_{\thetab_k}(\cdot, \lambdab) \}$\;
    }
 }
 \textbf{return} \texttt{hyper\_ens}($\Ecal_\text{strat.},\ K$)\;
\caption{\texttt{hyper\_deep\_ens}($K, \kappa$)}
\label{alg:stratified_hyper_ens}
\end{algorithm2e}
\end{minipage}
}
\end{wrapfigure}
In lines 1-2, running random search leads to a set of $\kappa$ models (i.e., $\Mcal_0$). If we were to stratify all of them, we would need $K$ seeds for each of those $\kappa$ models, hence a total of $\Ocal(\kappa K)$ models to train. However, we first apply \texttt{hyper\_ens} to extract $K$ models out of the $\kappa$ available ones, with $K \ll \kappa$. The stratification then needs $K$ seeds for each of those $K$ models (lines 4-7), thus $\Ocal(K^2)$ models to train.
We will see in~\Cref{sec:experiments} that even with standard hyperparameters, e.g., dropout or $L_2$ parameters, Algorithm~\ref{alg:stratified_hyper_ens} can lead to substantial improvements over deep ensembles.
In~\Cref{sec:supp_mat_stratified_hyper_ens_ablation_top_k},
we conduct ablation studies to relate to the top-$K$ strategy used in~\cite{saikia2020optimized} and NES-RS from~\cite{zaidi2020neural}.
\section{Hyper-batch ensembles}\label{sec:batch_hyperparameter_ensemble}
This section presents our efficient approach to construct ensembles over different hyperparameters.

\subsection{Composing the layer structures of batch ensembles and self-tuning networks}\label{sec:batch_hyperparameter_ensemble_layer}

The core idea lies in the \textit{composition} of the layers used by batch ensembles~\cite{wen2020batchensemble} for ensembling parameters and self-tuning networks~\cite{mackay2019self} for parameterizing the layer as an explicit function of hyperparameters. The composition preserves complementary features from both approaches.

We continue the example of the dense layer from 
\Cref{sec:background_deep_ens_and_batch_ens}-\Cref{sec:background_stn}.
The convolutional layer is described in~\Cref{sec:supp_mat_conv2d_batch_hyperparameter_ensemble}. Assuming an ensemble of size $K$, we have for $k \in \{1,\dots, K\}$
\begin{equation}\label{eq:dense_layer_batch_stn}
\Wb_k(\lambdab_k) =
\Wb \circ (\rb_k \sbb_k^\top)
+
[\Delta \circ (\ub_k \vb_k^\top) ] \circ \eb(\lambdab_k)^\top 
\ \ \text{and}\ \
\bb_k(\lambdab_k) = 
\bb_k + \deltab_k \circ \eb'(\lambdab_k),
\end{equation}
where the
$\rb_k$'s (respectively, $\ub_k$'s) in $\Real^{r}$ and $\sbb_k$'s (respectively, $\vb_k$'s) in $\Real^{s}$ are vectors which diversify the shared matrix $\Wb$ (respectively, $\Deltab$) in $\Real^{r \times s}$;
and the $\bb_k$'s in $\Real^{s}$ and $\deltab_k$'s in $\Real^{s}$ are the bias terms for each of the $K$ ensemble members.
We comment on some important properties of~(\ref{eq:dense_layer_batch_stn}):
\begin{itemize}%
    \item As noted by \cite{wen2020batchensemble}, formulation~(\ref{eq:batch_ensemble}) includes a set of rank-1 factors which diversify individual ensemble member weights. In (\ref{eq:dense_layer_batch_stn}), the rank-1 factors $\rb_k \sbb_k^\top$ and $\ub_k \vb_k^\top$ capture this weight diversity for each respective term.
    \item
    As noted by~\cite{mackay2019self}, formulation~(\ref{eq:dense_layer_stn}) captures local hyperparameter variations in the vicinity of some $\lambdab$. The term $[\Delta \circ (\ub_k \vb_k^\top) ] \circ \eb(\lambdab_k)^\top$ in (\ref{eq:dense_layer_batch_stn}) extends this behavior to the vicinity of the $K$ hyperparameters $\{\lambdab_1,\dots, \lambdab_K\}$ indexing the $K$ ensemble members.
    \item Equation~(\ref{eq:dense_layer_batch_stn}) maintains the compactness of the original layers of~\cite{mackay2019self, wen2020batchensemble}
    with a resulting memory footprint about twice as large as~\cite{wen2020batchensemble} and equivalent to~\cite{mackay2019self} up to the rank-1 factors.
    \item Given $K$ hyperparameters $\{\lambdab_1,\dots, \lambdab_K\}$ and a batch of inputs $\Xb \in \Real^{b \times r}$, the structure of~(\ref{eq:dense_layer_batch_stn}) preserves the efficient minibatching of~\cite{wen2020batchensemble}. If $\oneb_b$ is the vector of ones in $\Real^b$, we can tile $\Xb$, $\oneb_b \lambdab_k^\top$ and $\oneb_b \eb(\lambdab_k)^\top\!$, enabling all $K$ members to predict in a \textit{single} forward pass.
    \item
    From an implementation perspective, (\ref{eq:dense_layer_batch_stn}) enables direct reuse of existing code, e.g., \texttt{DenseBatchEnsemble} and \texttt{Conv2DBatchEnsemble} from~\cite{tran2019bayesian}. The implementation of our layers can be found in \url{https://github.com/google/edward2}.
\end{itemize}

\subsection{Objective function: from single model to  ensemble}\label{sec:batch_hyperparameter_ensemble_objective}

We first need to slightly overload the notation from \Cref{sec:background_stn} and we write $f_\Thetab(\xb, \lambdab_k)$ to denote the prediction for the input $\xb$ of the $k$-th ensemble member indexed by $\lambdab_k$.
In $\Thetab$, we pack all the parameters of $f$, as those described in the example of the dense layer in \Cref{sec:batch_hyperparameter_ensemble_layer}. In particular, predicting with $\lambdab_k$ is understood as using the corresponding parameters $\{\Wb_k(\lambdab_k), \bb_k(\lambdab_k)\}$ in~(\ref{eq:dense_layer_batch_stn}). 

\paragraph{Training and validation objectives.}
We want the ensemble members to account for a diverse combination of hyperparameters. As a result, each ensemble member is assigned its \textit{own} distribution of hyperparameters, which we write $p_t(\lambdab_k) = p(\lambdab_k|\xib_{k,t})$ for $k \in \{1,\dots, K\}$. 
Along the line of~(\ref{eq:global_stn}), we consider an expected training objective which now simultaneously operates over $\Lambdab_K = \{\lambdab_k\}_{k=1}^K$
\begin{equation}\label{eq:batch_hyperparameter_ensemble_training}
    \min_{\Thetab}
\Exp_{\substack{\Lambdab_K \sim q_t,  (\xb, y) \in \Dcal}} \Big[
\Lcal(\xb, y, \Thetab, \Lambdab_K)
\Big]
\ \ \text{with}\ \
q_t\big(\Lambdab_K\big) = q(\Lambdab_K| \{\xib_{k,t}\}_{k=1}^K) =
\prod_{k=1}^K p_t(\lambdab_k)
\end{equation}
and where $\Lcal$, compared with~(\ref{eq:map}), is extended to handle the ensemble predictions
\begin{equation*}
\Lcal(\xb, y, \Thetab, \Lambdab_K)
=
\ell\big(\{f_\Thetab(\xb, \lambdab_k)\}_{k=1}^K, y\big)
+
\Omega\big(\Thetab, \{\lambdab_k\}_{k=1}^K\big).
\end{equation*}
For example, the loss $\ell$ can be the ensemble cross entropy or the average
ensemble-member cross entropy (in our experiments, we will use the latter as recent results suggests it often generalizes better \cite{dusenberry2020efficient}).
The introduction of one distribution $p_t$ per ensemble member also affects the validation step of the alternating optimization, in particular we adapt~(\ref{eq:validation_loss_and_entropy}) to become
\begin{equation}\label{eq:batch_hyperparameter_ensemble_validation}
\min_{\{\xib_{k,t}\}_{k=1}^K}
\Exp_{\substack{\Lambdab_K \sim q_t,  (\xb, y) \in \Dcal_\text{val}}} 
\Big[ 
\ell_\text{val}(\{f_\Thetab(\xb, \lambdab_k)\}_{k=1}^K, y) - \tau \Hcal\big[q_t\big(\Lambdab_K\big)\big]
\Big].
\end{equation}
Note that the extensions~(\ref{eq:batch_hyperparameter_ensemble_training})-(\ref{eq:batch_hyperparameter_ensemble_validation}) with $K=1$ fall back to the standard formulation of~\cite{mackay2019self}. 
In our experiments, we take $\Omega$ to be $L_2$ regularizers applied to the parameters $\Wb_k(\lambdab_k)$ and $\bb_k(\lambdab_k)$ of each ensemble member.
In~\Cref{sec:supp_mat_l2_reg_batch_hyperparameter_ensemble}, we show how to efficiently vectorize the computation of $\Omega$ across the ensemble members and mini-batches of $\{\lambdab_k\}_{k=1}^K$ sampled from $q_t$, as required by~(\ref{eq:batch_hyperparameter_ensemble_training}).
In practice, we use one sample of $\Lambdab_K$ for each data point in the batch: for MLP/LeNet (\Cref{sec:experiments_mlp_lenet}), we use 256, while for ResNet-20/W.~ResNet-28-10 (\Cref{sec:experiments_resnet}), we use 512 (64 for each of 8 workers).

\paragraph{Definition of $p_t$.}
In the experiments of \Cref{sec:experiments}, we will manipulate hyperparameters $\lambdab$ that are positive and bounded (e.g., a dropout rate).
For each ensemble member with hyperparameters $\lambdab_k \in \Real^m$, we thus define its distribution $p_t(\lambdab_k)=p(\lambdab_k|\xib_{k,t})$ to be $m$ independent \textit{log-uniform distributions} (one per dimension in $\lambdab_k$), which is a standard choice for hyperparameter tuning, e.g.,~\cite{Bergstra2011,Bergstra2012,Mendoza2016}. With this choice, $\xib_{k,t}$ contains $2m$ parameters, namely the bounds of the ranges of the $m$ distributions. Similar to~\cite{mackay2019self}, at prediction time, we  take $\lambdab_k$ to be equal to the means $\lambdab^{\text{mean}}_k$ of the distributions $p_t(\lambdab_k)$.
In~\Cref{sec:supp_mat_choice_p_t_batch_hyperparameter_ensemble}, we provide additional details about $p_t$.

The validation steps~(\ref{eq:validation_loss_and_entropy}) and~(\ref{eq:batch_hyperparameter_ensemble_validation}) seek to optimize the bounds of the ranges. More specifically, the loss $\ell_\text{val}$ favors compact ranges around a good hyperparameter value whereas the entropy term encourages wide ranges, as traded off by $\tau$. We provide an example of the optimization trajectory of $\lambda$ and its range in \Cref{fig:initialization_vs_lambdas_and_example_trajectory}-(right), where $\lambda$ corresponds to the mean of the log-uniform distribution.
\section{Experiments}\label{sec:experiments}
Throughout the experiments, we use both metrics that depend on the predictive uncertainty---negative log-likelihood (NLL)
and expected calibration error (ECE)~\cite{naeini2015obtaining}---and metrics that do not, e.g., the classification accuracy. 
The supplementary material also reports Brier score~\cite{brier1950verification} (for which we typically observed a strong correlation with NLL).
Moreover, as diversity metric, we take the predictive disagreement of the ensemble members normalized by (1-accuracy), as used in~\cite{fort2019deep}.
In the tables, we write the number of ensemble members in brackets ``($\cdot$)'' next to the name of the methods.

\subsection{Multi-layer perceptron and LeNet on Fashion MNIST \& CIFAR-100}\label{sec:experiments_mlp_lenet}
To validate our approaches and run numerous ablation studies, we first focus on small-scale models, namely MLP
and LeNet~\cite{lecun1990handwritten}, over CIFAR-100~\cite{krizhevsky2009learning} and Fashion MNIST~\cite{xiao2017fashion}.
For both models, we add a dropout layer~\cite{srivastava2014dropout} before their last layer. For each pair of dataset/model type, we consider two tuning settings involving the dropout rate and different $L_2$ regularizers defined with varied granularity, e.g., layerwise.
\Cref{sec:supp_mat_arch_mlp_lenet} gives all the details about the training, tuning and dataset definitions.
\setlength{\tabcolsep}{4pt}
\begin{table}[t]
\caption{Comparison over CIFAR-100 and Fashion MNIST with MLP and LeNet models. We report means $\pm$ standard errors (over the 3 random seeds and pooled over the 2 tuning settings). ``single'' stands for the best between \randsearch{} and \bayesopt{}. ``fixed init ens'' is a shorthand for \hyperens{}, i.e., a ``row'' in~\Cref{fig:initialization_vs_lambdas_and_example_trajectory}-(left). 
We separately compare the \textit{efficient} methods (3 rightmost columns) and we mark in bold the best results (within one standard error).  
Our two methods {\color{MidnightBlue}hyper-deep/hyper-batch ensembles} improve upon deep/batch ensembles respectively (in~\Cref{sec:supp_mat_statistical_significance_mlp_lenet}, we assess the statistical significance of those improvements with a Wilcoxon signed-rank test, paired along settings, datasets and model types).}
\label{tab:mlp_lenet_results_ens_size_3_without_brier}
\resizebox{\textwidth}{!}{%
\begin{tabular}{@{}clcccc||ccc@{}}
\toprule
 &  & single~(1) & fixed init ens~(3) & {\color{MidnightBlue}hyper-deep ens~(3)} & deep ens~(3) & batch ens~(3) & STN (1) & {\color{MidnightBlue}hyper-batch ens~(3)} \\ \midrule \midrule
\multirow{3}{*}{\shortstack{cifar100\\(mlp)}} 
 & nll $\ \,\downarrow$ & {2.977}\spm{0.010} & \textbf{2.943}\spm{0.010} & \textbf{2.953}\spm{0.058} & \textbf{2.969}\spm{0.057} & 3.015\spm{0.003} & 3.029\spm{0.006} & \textbf{2.979}\spm{0.004}  \\
 & acc $\,\uparrow$ & {0.277}\spm{0.002} & \textbf{0.287}\spm{0.003} & \textbf{0.291}\spm{0.004} & \textbf{0.289}\spm{0.003} & 0.275\spm{0.001} & 0.268\spm{0.002} & \textbf{0.281}\spm{0.002}  \\
 & ece $\,\downarrow$ &  \textbf{0.034}\spm{0.008} & \textbf{0.029}\spm{0.007} & \textbf{0.022}\spm{0.007} & \textbf{0.038}\spm{0.014} & \textbf{0.022}\spm{0.002} & 0.033\spm{0.004} & 0.030\spm{0.002}   \\ \cmidrule(l){1-9}
\multirow{3}{*}{\shortstack{cifar100\\(lenet)}} 
 & nll $\ \,\downarrow$ & \textbf{2.399}\spm{0.204} & \textbf{2.259}\spm{0.067} & \textbf{2.211}\spm{0.066} & \textbf{2.334}\spm{0.141} & 2.350\spm{0.024} & 2.329\spm{0.017} & \textbf{2.283}\spm{0.016}\\
 & acc $\,\uparrow$ &  0.420\spm{0.011} & \textbf{0.439}\spm{0.008} & \textbf{0.452}\spm{0.007} & \textbf{0.421}\spm{0.026} & \textbf{0.438}\spm{0.003} & 0.415\spm{0.003} & 0.428\spm{0.003} \\
 & ece $\,\downarrow$ & \textbf{0.064}\spm{0.036} & \textbf{0.049}\spm{0.023} & \textbf{0.039}\spm{0.013} & \textbf{0.050}\spm{0.015} & 0.058\spm{0.015} & \textbf{0.024}\spm{0.007} & 0.058\spm{0.004} \\
 \cmidrule(l){1-9}
\multirow{3}{*}{\shortstack{fmnist\\(mlp)}}
 & nll $\ \,\downarrow$  &  0.323\spm{0.003} & \textbf{0.312}\spm{0.003} & \textbf{0.310}\spm{0.001} & 0.319\spm{0.005} & 0.351\spm{0.004} & 0.316\spm{0.003} & \textbf{0.308}\spm{0.002} \\ 
 & acc $\,\uparrow$ & 0.889\spm{0.002} & \textbf{0.893}\spm{0.001} & \textbf{0.895}\spm{0.001} & 0.889\spm{0.003} & 0.884\spm{0.001} & \textbf{0.890}\spm{0.001} & \textbf{0.892}\spm{0.001} \\
 & ece $\,\downarrow$ &  \textbf{0.013}\spm{0.003} & \textbf{0.012}\spm{0.005} & \textbf{0.014}\spm{0.003} & \textbf{0.010}\spm{0.003} & 0.020\spm{0.001} & \textbf{0.016}\spm{0.001} & \textbf{0.016}\spm{0.001} \\ \cmidrule(l){1-9}
\multirow{3}{*}{\shortstack{fmnist\\(lenet)}}
 & nll $\ \,\downarrow$ & 0.232\spm{0.002} & \textbf{0.219}\spm{0.002} & \textbf{0.216}\spm{0.002} & 0.226\spm{0.004} & 0.230\spm{0.005} & 0.224\spm{0.003} & \textbf{0.212}\spm{0.001} \\
 & acc $\,\uparrow$ &  0.919\spm{0.001} & \textbf{0.924}\spm{0.001} & \textbf{0.926}\spm{0.002} & 0.920\spm{0.002} & 0.920\spm{0.001} & 0.920\spm{0.001} & \textbf{0.924}\spm{0.001} \\
 & ece $\,\downarrow$ &  \textbf{0.017}\spm{0.005} & \textbf{0.014}\spm{0.004} & \textbf{0.018}\spm{0.002} & \textbf{0.013}\spm{0.004} & 0.017\spm{0.002} & 0.015\spm{0.001} & \textbf{0.009}\spm{0.001} \\  \bottomrule
\end{tabular}%
}
\end{table}

\paragraph{Baselines.} We compare our methods (i)
\strathyperens{}: hyper-deep ensemble of \Cref{sec:hyperparameter_ensemble} and
(ii) \batchhyperens{}: hyper-batch ensemble of \Cref{sec:batch_hyperparameter_ensemble}, to
(a)  \randsearch{}: the best single model after 50 trials of random search~\cite{Bergstra2012},
(b) \bayesopt{}: the best single model after 50 trials of Bayesian optimization~\cite{Snoek2012, Golovin2017},
(c) \deepens{}: deep ensemble~\cite{Lakshminarayanan2017} using the best hyperparameters found by random search,
(d) \batchens: batch ensemble~\cite{wen2020batchensemble},
(e) \stn: self-tuning networks~\cite{mackay2019self},
and
(f) \hyperens{}: %
defined in \Cref{sec:hyperparameter_ensemble}.
 The supplementary material details how we tune the hyperparameters specific to \batchens{}, \stn{} and \batchhyperens{} (see \Cref{sec:supp_mat_hyperparameters_batch_ens}, \Cref{sec:supp_mat_hyperparameters_stn} and~\Cref{sec:supp_mat_hyperparameters_batch_stn_ens} and further ablations about $\eb$ in~\Cref{sec:supp_mat_ablation_e} and $\tau$ in~\Cref{sec:supp_mat_ablation_tau}).
Note that \batchens{} needs the tuning of its own hyperparameters and those of the MLP/LeNet models, while \stn{} and \batchhyperens{} automatically tune the latter.

We highlight below the key conclusions from~\Cref{tab:mlp_lenet_results_ens_size_3_without_brier} with single models and ensemble of sizes 3. The same conclusions can also be drawn for the ensemble of size 5 (see~\Cref{sec:supp_mat_ens_size_3_and_5}).

\paragraph{Ensembles benefit from both weight and hyperparameter diversity.}
With the pictorial view of~\Cref{fig:initialization_vs_lambdas_and_example_trajectory} in mind, \hyperens{}, i.e., a ``row'', tends to outperform \deepens, i.e., a ``column''. Moreover, those two approaches (as well as the other methods of the benchmark) are outperformed by our stratified procedure \strathyperens{}, demonstrating the benefit of combining hyperparameter and initialization diversity (see~\Cref{sec:supp_mat_statistical_significance_mlp_lenet} for the detailed assessment of the statistical significance). In~\Cref{sec:supp_mat_mlp_lenet_diversity_analysis}, we study more specifically the diversity and we show that \strathyperens{} has indeed more diverse predictions than~\deepens{}.

\paragraph{Efficient ensembles benefit from both weight and hyperparameter diversity.}  Among the efficient approaches (the three rightmost columns of~\Cref{tab:mlp_lenet_results_ens_size_3_without_brier}), \batchhyperens{} performs best. It improves upon both \stn{} and \batchens{}, the two methods it builds upon.
In line with~\cite{mackay2019self}, \stn{} typically matches or improves upon \randsearch{} and \bayesopt{}.
As explained in \Cref{sec:batch_hyperparameter_ensemble_layer}, \batchhyperens{} has however twice the number of parameters of \batchens{}. In \Cref{sec:supp_mat_deep_batch_ens}, we thus compare with a ``deep ensemble of two batch ensembles'' (i.e., resulting in the same number of parameters but twice as many members as for \batchhyperens{}).
In that case, \batchhyperens{} also either improves upon or matches the performance of the combination of two \batchens{}.

\subsection{ResNet-20 and Wide ResNet-28-10 on CIFAR-10 \& CIFAR-100}\label{sec:experiments_resnet}

\begin{table}
\setlength{\tabcolsep}{4pt}
\caption{Performance of ResNet-20 (upper table) and Wide ResNet-28-10 (lower table) models on CIFAR-10/100.
We separately compare the \textit{efficient} methods (2 rightmost columns) and we mark in bold the best results (within one standard error).  
Our two methods {\color{MidnightBlue}hyper-deep/hyper-batch ensembles} improve upon deep/batch ensembles.
}
\label{tab:wide_resnet}

\centering
\resizebox{0.92\textwidth}{!}{%
\begin{tabular}{@{}clccc||cc@{}}
\toprule
 (ResNet-20) &  & single~(1) & deep ens~(4) & {\color{MidnightBlue}hyper-deep ens~(4)}& batch ens~(4) & {\color{MidnightBlue}hyper-batch ens~(4)} \\ \midrule \midrule
 \multirow{3}{*}{\shortstack{cifar100}} 
 & nll $\ \,\downarrow$ & 1.178 \spm{0.020} & 0.971 \spm{0.002} & \textbf{0.925}\spm{0.002} & 1.235  \spm{0.007} & \textbf{1.152} \spm{0.015} \\
 & acc $\,\uparrow$  & 0.682 \spm{0.005} & \textbf{0.726} \spm{0.000} & \textbf{0.742}\spm{0.001} & 0.697 \spm{0.000} & \textbf{0.699} \spm{0.002} \\
 & ece $\,\downarrow$ & 0.064  \spm{0.005}  &  0.059 \spm{0.000} & \textbf{0.049} \spm{0.001} & 0.119  \spm{0.001} & \textbf{0.095} \spm{0.002} \\
 & div $\,\uparrow$ &  --  & \textbf{1.177} \spm{0.004} & \textbf{1.323} \spm{0.001} & \textbf{0.154}\spm{0.006} & \textbf{0.159} \spm{0.007} \\
 \cmidrule(l){1-7}
 \multirow{3}{*}{\shortstack{cifar10}} 
 & nll $\ \,\downarrow$ & 0.262  \spm{0.006} & \textbf{0.193} \spm{0.000} & \textbf{0.192}\spm{0.004} & 0.278 \spm{0.004} & \textbf{0.235} \spm{0.004} \\
 & acc $\,\uparrow$  & 0.927   \spm{0.001} & 0.937  \spm{0.000} & \textbf{0.940}\spm{0.000} & \textbf{0.929}\spm{0.000} & \textbf{0.929} \spm{0.001} \\
 & ece $\,\downarrow$ & 0.035 \spm{0.001}  &  \textbf{0.010} \spm{0.000} & 0.012\spm{0.001} & 0.039\spm{0.001} & \textbf{0.017}\spm{0.000} \\
 & div $\,\uparrow$ &  --  & 1.393 \spm{0.025} & \textbf{1.451} \spm{0.018} & 0.789 \spm{0.010} & \textbf{0.821} \spm{0.013} \\
 \bottomrule
\end{tabular}%
 }\\~\\
\centering
\resizebox{0.92\textwidth}{!}{%
\begin{tabular}{@{}clccc||cc@{}}
\toprule
 (WRN-28-10) &  & single~(1) & deep ens~(4) & {\color{MidnightBlue}hyper-deep ens~(4)}& batch ens~(4) & {\color{MidnightBlue}hyper-batch ens~(4)} \\ \midrule \midrule
 \multirow{3}{*}{\shortstack{cifar100}} 
 & nll $\ \,\downarrow$ &  0.811 \spm{0.026} & 0.661 \spm{0.001} & \textbf{0.652}\spm{0.000} & 0.690 \spm{0.005} & \textbf{0.678} \spm{0.005} \\
 & acc $\,\uparrow$  &   0.801 \spm{0.004} &  0.826 \spm{0.001} & \textbf{0.828}\spm{0.000} & \textbf{0.819} \spm{0.001} & \textbf{0.820} \spm{0.000} \\
 & ece $\,\downarrow$ &  0.062 \spm{0.001}  &  0.022 \spm{0.000} & \textbf{0.019} \spm{0.000} & 0.026 \spm{0.002} & \textbf{0.022} \spm{0.001} \\
 & div $\,\uparrow$ &  --  &  0.956\spm{0.009} & \textbf{1.086} \spm{0.011} & 0.761\spm{0.014} & \textbf{0.996} \spm{0.015} \\
 \cmidrule(l){1-7}
 \multirow{3}{*}{\shortstack{cifar10}} 
 & nll $\ \,\downarrow$ &  0.152 \spm{0.009} & 0.125 \spm{0.000} & \textbf{0.115}\spm{0.001} & 0.136 \spm{0.001} & \textbf{0.126} \spm{0.001} \\
 & acc $\,\uparrow$  &   0.961 \spm{0.001} &  0.962 \spm{0.000} & \textbf{0.965}\spm{0.000} & \textbf{0.963}\spm{0.001} & \textbf{0.963} \spm{0.000} \\
 & ece $\,\downarrow$ &  0.023\spm{0.005}  &  \textbf{0.007}\spm{0.000} & \textbf{0.007}\spm{0.000} & 0.017\spm{0.001} & \textbf{0.009}\spm{0.001} \\
 & div $\,\uparrow$ &  --  &  0.866 \spm{0.017} & \textbf{1.069} \spm{0.025} & 0.444\spm{0.003} & \textbf{0.874} \spm{0.026} \\
 \bottomrule
\end{tabular}%
 }
 
\end{table}

We evaluate our approach in a large-scale setting with ResNet-20~\cite{he2016deep} and Wide ResNet 28-10 models~\cite{zagoruyko2016wide} as they are simple architectures with competitive performance on image classification tasks. We consider six different $L_2$ regularization hyperparameters (one for each block of the ResNet) and a label smoothing hyperparameter. We show results on CIFAR-10, CIFAR-100 and corruptions on CIFAR-10~\cite{hendrycks2019benchmarking, snoek2019can}. Moreover, in~\Cref{sec:supp_results_full_odd}, we provide additional out-of-distribution evaluations along the line of~\cite{hein2019relu}. Further details about the experiment settings can be found in \Cref{sec:supp_resnet_details}.

\paragraph{CIFAR-10/100.} We compare \strathyperens{} with a \single{} model (tuned as next explained) and \deepens{} of varying ensemble sizes. Our \strathyperens{} is constructed based on 100 trials of random search while \deepens{} and \single{} take the best hyperparameter configuration found by the random search procedure. \Cref{fig:str_hyper_ens_cifar100} displays the results on CIFAR-100 along with the standard errors and shows that throughout the ensemble sizes, there is a substantial performance improvement of hyper-deep ensembles over deep ensembles. The results for CIFAR-10 are shown in \Cref{sec:supp_resnet_details} where \strathyperens{}
leads to consistent but smaller improvements, e.g., in terms of NLL.
We next fix the ensemble size to four and compare the performance of \batchhyperens{} with the direct competing method \batchens{}, as well as with \strathyperens{}, \deepens{} and \single{}.

\begin{wrapfigure}{r}{0.6\textwidth}
\begin{minipage}{0.6\textwidth}
\vspace{-0.5cm}
\resizebox{0.98\textwidth}{!}{
\includegraphics[scale=1.]{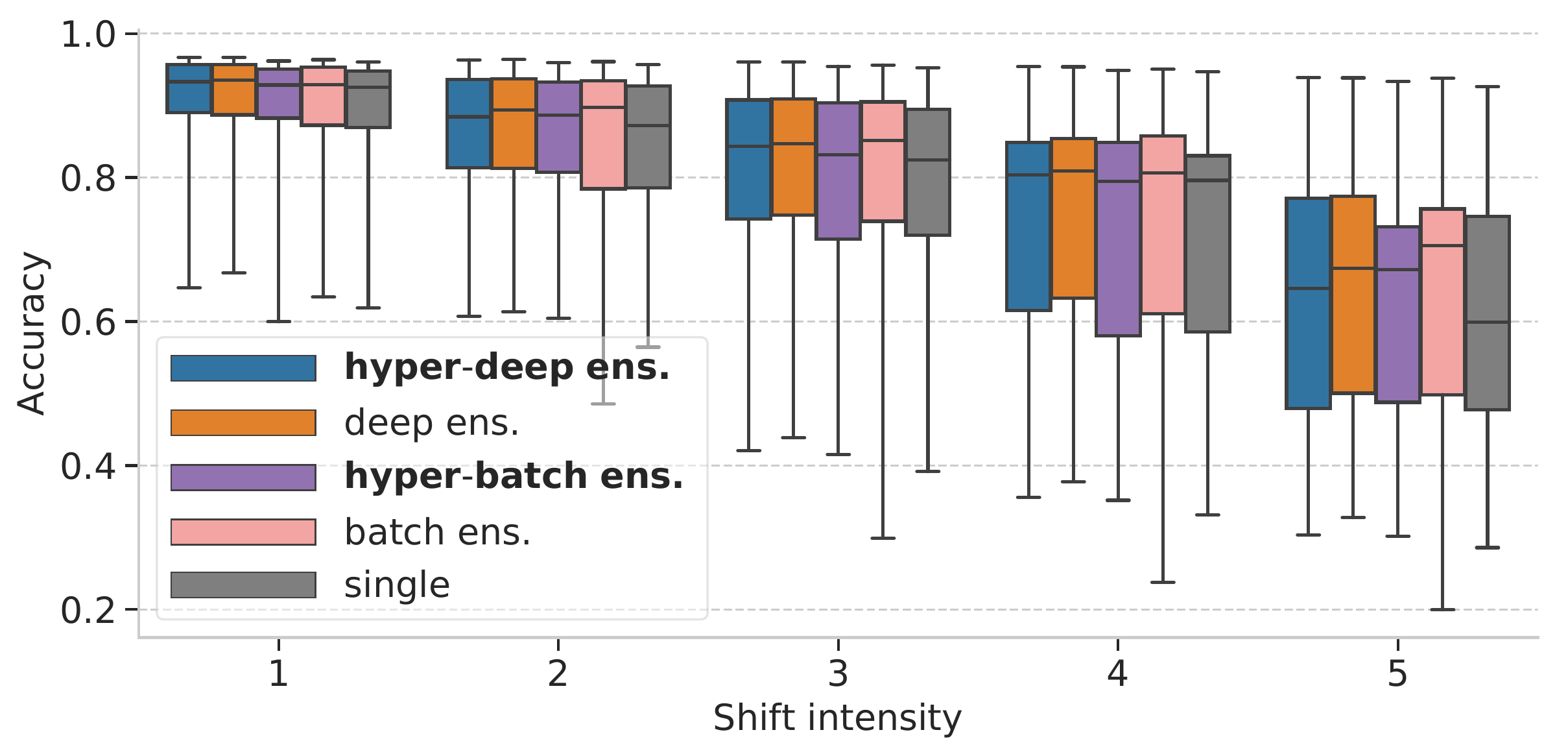}
}
\vspace{-0.2cm}%
\captionof{figure}{CIFAR-10 corruptions. Each box shows the quartiles summarizing the results
across all types of shifts while the error bars give the min/max across different shift types.}
\label{fig:cifar10_ood}
\vspace{-0.1cm}%
\end{minipage}
\end{wrapfigure}
The results are reported in \Cref{tab:wide_resnet}. On CIFAR-100, \batchhyperens{} improves, or matches, \batchens{} across all metrics.
For instance, in terms of NLL, it improves upon \batchens{} by about 7\% and 2\% for ResNet-20 and Wide ResNet 28-10 respectively. 
Moreover, the members of \batchhyperens{} make more diverse predictions than those of \batchens{}. On CIFAR-10 \batchhyperens{} also achieves a consistent improvement, though less pronounced (see~\Cref{tab:wide_resnet}). On the same Wide ResNet 28-10 benchmark, with identical training and evaluation pipelines (see \url{https://github.com/google/uncertainty-baselines}), variational inference~\cite{wen2018flipout} leads to (NLL, ACC, ECE)=(0.211, 0.947, 0.029) and (NLL, ACC, ECE)=(0.944, 0.778, 0.097) for CIFAR-10 and CIFAR-100 respectively, while Monte Carlo dropout~\cite{gal2016dropout} gets (NLL, ACC, ECE)=(0.160, 0.959, 0.024) and (NLL, ACC, ECE)=(0.830, 0.776, 0.050) for CIFAR-10 and CIFAR-100 respectively.

We can finally look at how the joint training in \batchhyperens{} leads to \textit{complementary} ensemble members. For instance, for Wide ResNet 28-10 on CIFAR-100, while the ensemble performance are (NLL, ACC)=(0.678, 0.820) (see~\Cref{tab:wide_resnet}), the {individual members} obtain substantially poorer performance, as measured by the \textit{average ensemble-member metrics}  (NLL, ACC)=(0.904, 0.788).

\paragraph{Training time and memory cost.} 
Both in terms of the number of parameters and training time, \batchhyperens{} is about twice as costly as \batchens{}. For CIFAR-100, \batchhyperens{} takes 2.16 minutes/epoch and \batchens{} 1.10 minute/epoch. More details are available in~\Cref{sec:supp_time_and_memory}.

\paragraph{Calibration on out of distribution data.} We measure the calibrated prediction on corrupted datasets, which is a type of out-of-distribution examples. We consider the recently published dataset by~\cite{hendrycks2019benchmarking}, which consists of over 30 types of corruptions to the images of CIFAR-10. A similar benchmark can be found in~\cite{snoek2019can}. On~\Cref{fig:cifar10_ood}, 
we find that all ensembles methods improve upon the single model. The mean accuracies are similar for all ensemble methods, whereas \batchhyperens{} shows more robustness than \batchens{} as it typically leads to smaller worst values (see bottom whiskers in~\Cref{fig:cifar10_ood}).
Plots for calibration error and NLL can be found in \Cref{sec:supp_results_cifar100}.

\section{Discussion}

We envision several promising directions for future research.

\paragraph{Towards more compact parametrization.} In this work, we have used the layers from~\cite{mackay2019self} that lead to a 2x increase in memory compared with standard layers. In lieu of~(\ref{eq:dense_layer_stn}), \textit{low-rank} parametrizations, e.g., $\Wb + \sum_{j=1}^h e_j(\lambdab) \gb_j \hb_j^\top$, would be appealing to reduce the memory footprint of self-tuning networks and hyper-batch ensembles. We formally show in~\Cref{sec:supp_mat_more_compact_stn_layers} that this family of parametrizations is well motivated in the case of shallow models where they enjoy good approximation guarantees.

\paragraph{Architecture diversity.}
Our proposed hyperparameter ensembles provide diversity with respect to hyperparameters related to regularization and optimization. We would like to go further in ensembling very different functions in the search space, such as network width, depth~\cite{antoran2020depth}, and the choice of residual block.
Doing so connects to older work on Bayesian marginalization over structures \cite{kemp2008discovery,adams2010learning}.
More broadly, we can wonder what other \textit{types} of diversity matter to endow deep learning models with better uncertainty estimates?

\section*{Broader Impact}

Our work belongs to a broader research effort that tries to quantify the predictive uncertainty for deep neural networks. Those models are known to generalize poorly to small changes to the data while maintaining high confidence in their predictions.

\paragraph{Who may benefit from this research?}

The broader topic of our work is becoming increasingly important in a context where machine learning systems are being deployed in safety-critical fields, e.g., medical diagnosis~\cite{miotto2016deep, liu2020deep} and self-driving cars~\cite{levinson2011towards}.
Those examples would benefit from the general technology we contribute to.
In those cases, it is essential to be able to reliably trust the uncertainty output by the models before any decision-making process, to possibly escalate uncertain decisions to appropriate human operators. 

\paragraph{Who may be put at disadvantage from this research?}

We are not aware of a group of people that may be put at disadvantage as a result of this direct research.

\paragraph{What are the consequences of failure of the system?}

By definition, our research could contribute to aspects of machine-learning systems used in high-risk domains (e.g., we mentioned earlier medical fields and self-driving cars) which involves complex data-driven decision-making processes. Depending on the nature of the application at hand, a failure of the system could lead to extremely negative consequences. A case in point is the recent screening system used by one third of UK government councils to allocate welfare budget.
\footnote{ \href{https://www.theguardian.com/society/2019/oct/15/councils-using-algorithms-make-welfare-decisions-benefits}{Link} to the corresponding article in The Guardian, October 2019:\\
{\scriptsize https://www.theguardian.com/society/2019/oct/15/councils-using-algorithms-make-welfare-decisions-benefits}.}

\paragraph{Do the task/method leverage biases in the data?}

The method we develop in this work is domain-agnostic and does not rely on specific data assumptions. Our method also does not contain components that would prevent its combination with existing fairness or privacy-preserving technologies~\cite{barocas2018fairness}.

\section*{Acknowledgments}
We would like to thank Nicolas Le Roux, Alexey Dosovitskiy and Josip Djolonga for insightful discussions at earlier stages of this project. Moreover, we would like to thank Sebastian Nowozin, Klaus-Robert M\"uller and Balaji Lakshminarayanan for helpful comments on a draft of this paper.

\bibliographystyle{abbrv}
\bibliography{MainBibliography.bib,temp_bibliography.bib}

\clearpage

\section*{Supplementary Material: \\Hyperparameter Ensembles for Robustness and Uncertainty Quantification}

\appendix
\section{Further details about fixed init hyper ensembles and hyper-deep ensembles}\label{sec:supp_mat_details_about_stratified_hyper_ens}

We recall the procedure from~\cite{Caruana2004} in Algorithm~\ref{alg:hyper_ens}. In words, given a pre-defined set of models $\Mcal$ (e.g., the outcome of random search), we greedily grow an ensemble, until some target size $K$ is met, by selecting with replacement the model leading to the best improvement of some score $\Scal$ such as the validation negative log-likelihood.

The \textit{with-replacement} selection strategy makes it possible to construct ensembles where the contributions of each member is weighted (see Section 2.1 in~\cite{Caruana2004}). To properly account for the fact that there may be multiple times the same model selected, we use ``.unique()'' in Algorithms~\ref{alg:stratified_hyper_ens}-\ref{alg:hyper_ens} to correctly count the number of members.

\begin{algorithm2e}[H]
\SetAlgoLined
 ensemble $\Ecal = \{\ \}$, score\  $\Scal(\cdot)$,  $\Scal_\text{best}=+\infty$\;
 \While{$|\Ecal.\mathrm{{unique}()}| \leq K$}{
  $
  f_{\thetab^\star} = \argmin_{f_\thetab \in \Mcal}\Scal(\Ecal \cup \{f_\thetab\})
  $\;
  \eIf{$\Scal(\Ecal \cup \{f_{\thetab^\star}\}) < \Scal_\mathrm{best}$}{
   $\Ecal = \Ecal \cup \{f_{\thetab^\star}\},\ \Scal_\text{best} = \Scal(\Ecal)$\;
   }{
   \textbf{return} $\Ecal$\;
  }
 } \textbf{return} $\Ecal$\;
 \caption{\texttt{hyper\_ens}($\Mcal,\ K$) \#\ \ Caruana et al.~\cite{Caruana2004}}
 \label{alg:hyper_ens}
\end{algorithm2e}

\section{Further details about hyper-batch ensemble}

\subsection{The structure of the convolutional layer}\label{sec:supp_mat_conv2d_batch_hyperparameter_ensemble}

We detail the structure of the (two-dimensional) convolutional layer of hyper-batch ensemble in the case of $K$ ensemble members.
Similar to the dense layer presented in~\Cref{sec:batch_hyperparameter_ensemble_layer}, the convolutional layer is obtained by \textit{composing} the layer of batch ensemble~\cite{wen2020batchensemble} and that of self-tuning networks~\cite{mackay2019self}.

Let us denote by $\Kb \in \Real^{l \times l \times c_\text{in} \times c_\text{out}}$ and $\bb_k \in \Real^{c_\text{out}}$ the convolution kernel and the $k$-th member-specific bias term, with $l$ the kernel size, $c_\text{in}$ the number of input channels and $c_\text{out}$ the number of output channels (also referred to as the number of filters).

For $k \in \{1,\dots, K\}$, let us consider the following auxiliary vectors $\rb_k, \ub_k \in \Real^{c_\text{in}}$ and $\sbb_k, \vb_k \in \Real^{c_\text{out}}$.
For $\Deltab$ of the same shape as $\Kb$ and the embedding $\eb(\lambdab_k) \in \Real^{c_\text{out}}$, we have
\begin{equation}\label{eq:supp_mat_conv2d_kernel_batch_hyper_ens}
\Kb_k(\lambdab_k) = 
\Kb \circ (\rb_k \sbb_k^\top) + 
[\Deltab \circ (\ub_k \vb_k^\top)] \circ \eb(\lambdab_k)^\top
\end{equation}
where the rank-1 factors are understood to be broadcast along the first two dimensions.
Similar, for the bias terms, we have
\begin{equation}\label{eq:supp_mat_conv2d_bias_batch_hyper_ens}
\bb_k(\lambdab_k) = 
\bb_k + 
\deltab_k \circ \eb'(\lambdab_k)
\end{equation}
with $\deltab_k, \eb'(\lambdab_k)$ of the same shape as $\bb_k$.

Given the form of~(\ref{eq:supp_mat_conv2d_kernel_batch_hyper_ens}) and~(\ref{eq:supp_mat_conv2d_bias_batch_hyper_ens}), we can observe that the conclusions drawn for the dense layer in~\Cref{sec:batch_hyperparameter_ensemble_layer} also hold for the convolutional layer.

\subsection{Efficient computation of the $L_2$ regularizer}\label{sec:supp_mat_l2_reg_batch_hyperparameter_ensemble}

We recall that each ensemble member manipulates its own hyperparameters $\lambdab_k \in \Real^m$ and, as required by the training procedure in~(\ref{eq:batch_hyperparameter_ensemble_training}), those hyperparameters are sampled as part of the stochastic optimization.

We focus on the example of a given dense layer, with weight matrix $\Wb_k(\lambdab_k)$ and bias term $\bb_k(\lambdab_k)$, as exposed in~\Cref{sec:batch_hyperparameter_ensemble_layer}.

Let us consider a minibatch of size $b$ for the $K$ ensemble members, i.e., $\{\lambdab_{k, i}\}_{k=1}^{K}$ for $i \in \{1,\dots, b\}$.
Moreover, let us introduce the scalar $\nu_{k, i}$ that is equal to the entry in $\lambdab_{k, i}$ containing the value of the $L_2$ penalty for the particular dense layer under study.\footnote{The precise relationship between $\nu_{k,i}$ and $\lambdab_{k,i}$ depends on the implementation details and on how the hyperparameters of the problem, e.g., the dropout rates or $L_2$ penalties, are stored in the vector $\lambdab_{k,i}$.}

With that notation, we concentrate on the efficient computation (especially the vectorization with respect to the minibatch dimension) of
\begin{equation}\label{eq:supp_mat_full_l2_reg}
\frac{1}{b K}
\sum_{i=1}^b
\sum_{k=1}^K
\nu_{k,i}
\| \Wb_k(\lambdab_{k,i}) \|^2,
\end{equation}
the case of the bias term following along the same lines.
From~\Cref{sec:batch_hyperparameter_ensemble_layer} we have
\begin{equation*}%
\Wb_k(\lambdab_{k,i}) =
 \Wb \circ (\rb_k \sbb_k^\top) + 
[\Deltab \circ (\ub_k \vb_k^\top)] \circ \eb(\lambdab_{k,i})^\top =
\Wb_k + \Deltab_k \circ \eb_{k,i}^\top
\end{equation*}
which we have simplified by introducing a few additional shorthands. Let us further introduce
\begin{equation*}
\langle \nu_k \rangle = \frac{1}{b} \sum_{i=1}^b \nu_{k, i}
\ \text{and}\
\langle \nu_k \eb_k \rangle = \frac{1}{b} \sum_{i=1}^b \nu_{k, i} \eb_{k, i}
\ \text{and}\
\langle \nu_k \eb_k^2 \rangle = \frac{1}{b} \sum_{i=1}^b \nu_{k, i} ( \eb_{k, i} \circ \eb_{k, i}).
\end{equation*}
We then develop $\|\Wb_k(\lambdab_{k,i})\|^2$ into $\|\Wb_k\|^2 + 2 \Wb_k^\top (\Deltab_k \circ \eb_{k, i}^\top) + \|\Deltab_k \circ \eb_{k, i}^\top\|^2$ and plug the decomposition into~(\ref{eq:supp_mat_full_l2_reg}), with $\Deltab_k^2 = \Deltab_k \circ \Deltab_k$, leading to
\begin{equation*}
\frac{1}{K}
\sum_{k=1}^K
\Big\{
\langle \nu_k \rangle
\|\Wb_k\|^2
+
2 \Wb_k^\top (\Deltab_k \circ \langle \nu_k \eb_k \rangle^\top)
+
\sum_{l,l'} (\Deltab_k^2)_{l,l'} \langle \nu_k \eb_k^2 \rangle_{l'}
\Big\}
\end{equation*}
for which all the remaining operations can be efficiently broadcast. 

\subsection{Details about the choice of the distributions $p_t$}\label{sec:supp_mat_choice_p_t_batch_hyperparameter_ensemble}

We discuss in this section additional details about the choice of the distributions over the hyperparameters $p_t(\lambdab_k)=p(\lambda_k|\xib_{k,t})$.

In the experiments of \Cref{sec:experiments}, we manipulate hyperparameters $\lambdab_k$'s that are positive and bounded (e.g., a dropout rate). To simplify the exposition, let us focus momentarily on a single ensemble member ($K=1$).
Let us further consider such a positive, bounded one-dimensional hyperparameter $\lambda \in [a, b]$, with $\ 0 < a < b$, and define $\phi(t) = (b-a)\ \texttt{sigmoid}(t)+a$, with $\phi^{-1}$ its inverse. In that setting, \cite{mackay2019self} propose to use for $p_t(\lambda) = p(\lambda|\xib_t)$ the following distribution:
\begin{equation}\label{eq:p_t_mackay}
\lambda | \xib_t \sim  \phi\big(\phi^{-1}(\lambda_t) + \varepsilon\big)
\ \ \text{with}\ \
\varepsilon \sim \Ncal(0, \sigma_t)
\ \ \text{and}\ \
\xib_t = \{\sigma_t, \lambda_t\}.
\end{equation}
In preliminary experiments we carried out, we encountered issues with~(\ref{eq:p_t_mackay}), e.g., $\lambda$ consistently pushed to its lower bound $a$ during the optimization.

We have therefore departed from~(\ref{eq:p_t_mackay}) and have focused instead on a simple log-uniform distribution, which is a standard choice for hyperparameter tuning, e.g.,~\cite{Bergstra2011,Bergstra2012,Mendoza2016}. Its probability density function is given by 
\begin{equation*}
p(\lambda|\xib_t) = 1/(\lambda \log(b/a))\ \text{with}\ \xib_t = \{a, b\},
\end{equation*}
while its entropy equals $\Hcal[p(\lambda|\xib_t)] = 0.5(\log(a) + \log(b)) + \log(\log(b/a))$. 
The mean of the distribution is given by $(b-a)/(\log(b)-\log(a))$ and is used to make predictions.

To summarize, and going back to the setting with $K$ ensemble members and $m$-dimensional $\lambdab_k$'s, the optimization of $\{\xib_{k,t}\}_{k=1}^K$ in the validation step involves $2mK$ parameters, i.e., the lower/upper bounds for each hyperparameter and for each ensemble member (in practice, $K\approx 5$ and $m\approx 5-10$).

\section{Further details about the MLP and LeNet experiments}

We provide in this section additional material about the experiments based on MLP and LeNet.

\subsection{MLP and LeNet archtectures and experimental settings}\label{sec:supp_mat_arch_mlp_lenet}

The architectures of the models are:
\begin{itemize}
    \item \textbf{MLP}: The multi-layer perceptron is composed of 2 hidden layers with 200 units each. The activation function is ReLU. Moreover a dropout layer is added before the last layer.
    \item \textbf{LeNet}~\cite{lecun1990handwritten}: This convolutional neural network is composed of a first conv2D layer (32 filters) with a max-pooling operation followed by a second conv2D layer (64 filters) with a max-pooling operation and finally followed by two dense layers (512 and number-of-classes units). The activation function is ReLU everywhere. Moreover, we add a dropout layer before the last dense layer.
\end{itemize}

As briefly discussed in the main paper, in the first tuning setting (i), there are two $L_2$ regularization parameters for those models: one for all the weight matrices and one for all the bias terms of the conv2D/dense layers; in the second tuning setting (ii), the $L_2$ regularization parameters are further split on a per-layer basis (i.e., a total of $3\times2=6$ and $4\times2=8$ $L_2$ regularization parameters for MLP and LeNet respectively). 

The ranges for the dropout and $L_2$ parameters are $[10^{-3}, 0.9]$ and $[10^{-3}, 10^{3}]$ across all settings (i)-(ii), models and datasets (CIFAR-100 and Fashion MNIST).

We take the official train/test splits of the two datasets, and we further subdivide (80\%/20\%) the train split into actual train/validation sets. We use everywhere Adam~\cite{kingma2013auto} with learning rate $10^{-4}$, a batchsize of 256 and 200 (resp.~500) training epochs for LeNet (resp. MLP). We tune all methods to minimize the validation NLL. All the experiments are repeated with 3 random seeds.

\subsection{Selection of the hyperparameters of batch ensemble}\label{sec:supp_mat_hyperparameters_batch_ens}

Following the recommendations from~\cite{wen2020batchensemble}, we tuned
\begin{itemize}
    \item The type of the initialization of the vectors $\rb_k$'s and $\sbb_k$'s (see \Cref{sec:background_deep_ens_and_batch_ens}). We indeed observed that the performance was sensitive to this choice. We selected from the different initialization schemes proposed in~\cite{wen2020batchensemble}
    \begin{itemize}
        \item Entries distributed according to the Gaussian distribution $\Ncal(\oneb, 0.5 \times \Ib)$
        \item Entries distributed according to the Gaussian distribution $\Ncal(\oneb, 0.75 \times \Ib)$
        \item Random independent signs, with probability of $+1$ equal to $0.5$
        \item Random independent signs, with probability of $+1$ equal to $0.75$
    \end{itemize}
    \item A scale factor $\kappa$ to make it possible to reduce the learning rate applied to the vectors $\rb_k$'s and $\sbb_k$'s. Following~\cite{wen2020batchensemble}, we considered the scale factor $\kappa$ in $\{1.0, 0.5\}$.
    \item Whether to use the Gibbs or ensemble cross-entropy at training time. Early experiments showed that Gibbs cross-entropy was substantially better so that we kept this choice fixed thereafter.
    \item Whether to regularize the vectors $\rb_k$'s and $\sbb_k$'s. \cite{wen2020batchensemble} mentioned that the two options perform equally well while we observed in those smaller-scale experiments that batch ensemble could overfit in absence of regularization. 
\end{itemize}

The two batch ensemble-specific hyperparameters above (initialization type and $\kappa$) together with the MLP/LeNet hyperparameters were tuned by 50 trials of random search, separately for each ensemble size (3 and 5) and for each triplet (dataset, model type, tuning setting).

\subsection{Selection of the hyperparameters of self-tuning networks}\label{sec:supp_mat_hyperparameters_stn}

We re-used as much as possible the hyperparameters and design choices from~\cite{mackay2019self}, i.e., 5 warm-up epochs (during which no tuning happens) before starting the alternating scheme (2 training steps followed by 1 tuning step).

For the tuning step, the batch size is taken to be the same as that of the training step (256), while the learning was set to $5\times 10^{-4}$.

We tuned the entropic regularization parameter $\tau \in \{0.01, 0.001, 0.0001\}$, separately for each triplet (dataset, model type, tuning setting), as done for all the methods compared in the benchmark. We observed that $\tau=0.001$ was often found to be the best option,  and it therefore constitutes a good default value, as reported in~\cite{mackay2019self}.

As studied in \Cref{sec:supp_mat_ablation_e}, we fix the embedding model $\eb(\cdot)$ to be an MLP with one hidden layer of 64 units and a tanh activation.

\subsection{Selection of the hyperparameters of hyper-batch ensemble}\label{sec:supp_mat_hyperparameters_batch_stn_ens}

We followed the very same protocol as that used for the standard self-tuning network (as described in \Cref{sec:supp_mat_hyperparameters_stn}).

By construction, we also inherit from the batch ensemble-specific hyperparameters (see \Cref{sec:supp_mat_hyperparameters_batch_ens}). To keep the protocol simple, we only tune the most important hyperparameter, namely the type of the initialization of the rank-1 terms (while the scale factor $\kappa$ to discount the learning rate was not considered). As for any other methods in the benchmark, $\tau$ and the initialization type were tuned separately for each triplet (dataset, model type, tuning setting).

For good default choices, we recommend to take $\tau=0.001$ and use an initialization scheme with random independent signs (with the probability of $+1$ equal to $0.75$).

\subsection{Choice of the embedding $\eb(\cdot)$}\label{sec:supp_mat_ablation_e}
We study the impact of the choice of the model that defines the embedding $\eb(\cdot)$.

In~\cite{mackay2019self}, $\eb(\cdot)$ is taken to be a simple linear transformation. In a slightly different context, the authors of~\cite{Dosovitskiy2020You} consider MLPs with one hidden layer of 128 or 256 units, depending on their applications. 

In the light of those previous choices, we compare the performance of different architectures of $\eb(\cdot)$, namely linear (i.e., 0 units) and one hidden layer of 64, 128, and 256 units.
The results are summarized in~\Cref{fig:sup_mat_ablation_e_and_tau}-(left), for different ensemble sizes (one corresponding to the standard self-tuning networks~\cite{mackay2019self}). We computed the validation NLL averaged over all the datasets (Fashion MNIST/CIFAR 100), model types (MLP/LeNet), tuning settings and random seeds.

Based on~\Cref{fig:sup_mat_ablation_e_and_tau}-(left), we select for $\eb(\cdot)$ an MLP with a single hidden layer of 64 units and a tanh activation function. 

\begin{figure}[t]
\vspace{-0.0cm}
\centering
\resizebox{0.45\textwidth}{!}{
\includegraphics[scale=0.5]{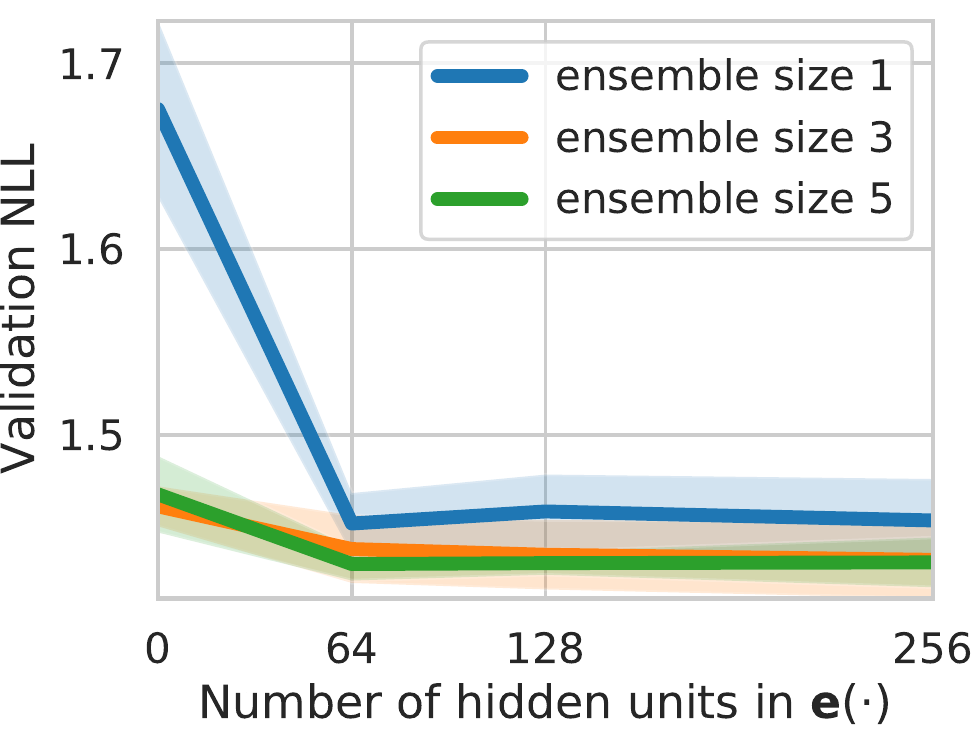}
}
\resizebox{0.45\textwidth}{!}{
\includegraphics[scale=0.5]{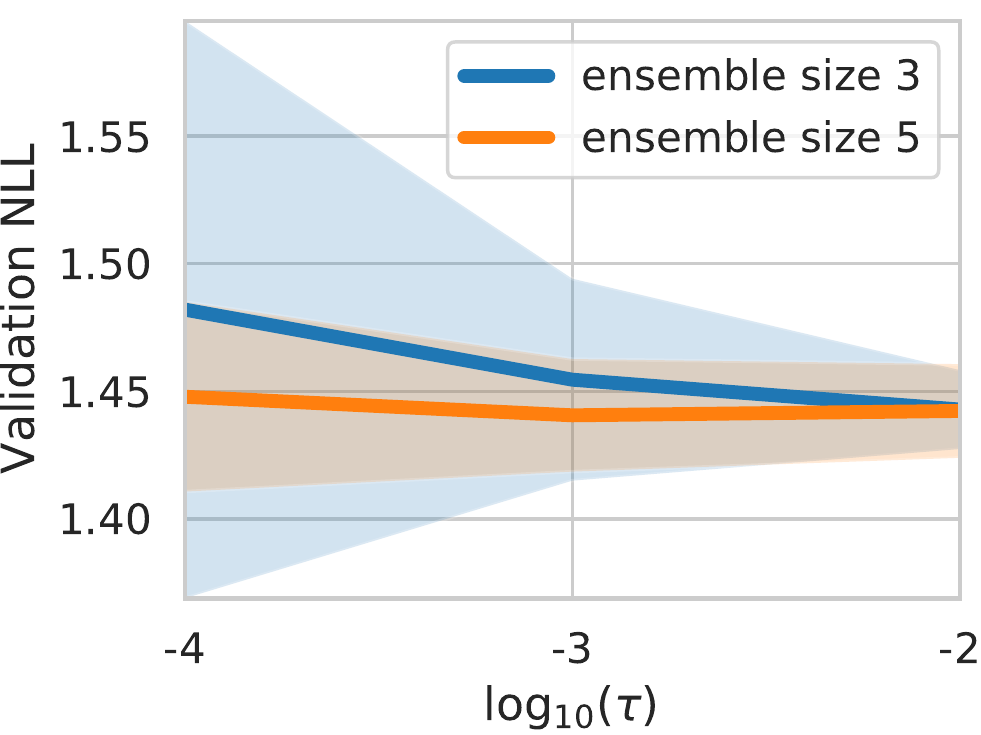}
}
\vspace{-0.0cm}%
\caption{\textsc{left:} Evolution of the validation NLL for different choices of the embedding model $\eb(\cdot)$. The validation NLL is averaged over all the datasets (Fashion MNIST/CIFAR 100), model types (MLP/LeNet), tuning settings and random seeds. Zero unit means a linear transformation without hidden layer, while $\{64, 128, 256\}$ units are for a single hidden layer. \textsc{right:} Evolution of the validation NLL for different values of $\tau$. The validation NLL is averaged over all the datasets (Fashion MNIST/CIFAR 100), model types (MLP/LeNet), tuning settings and random seeds.
}%
\label{fig:sup_mat_ablation_e_and_tau}%
\vspace{-0.0cm}%
\end{figure}

\subsection{Sensitivity analysis with respect to the entropy regularization parameter $\tau$}\label{sec:supp_mat_ablation_tau}

We study the impact of the choice of the entropy regularization parameter $\tau$ in~(\ref{eq:batch_hyperparameter_ensemble_validation}).
We report in~\Cref{fig:sup_mat_ablation_e_and_tau}-(right) how the validation negative log-likelihood---aggregated over all the datasets (Fashion MNIST/CIFAR 100), model types (MLP/LeNet), tuning settings and random seeds---varies with $\tau \in \{0.01, 0.001, 0.0001\}$.

As discussed in~\Cref{sec:supp_mat_hyperparameters_stn} and in~\Cref{sec:supp_mat_hyperparameters_batch_ens}, a good default value, as already reported in~\cite{mackay2019self} is $\tau=0.001$.

\subsection{Complementary results}

\subsubsection{Results for ensembles of size 3 and 5}\label{sec:supp_mat_ens_size_3_and_5}

In~\Cref{tab:mlp_lenet_results_ens_size_3_and_5_non_efficient_methods} and~\Cref{tab:mlp_lenet_results_ens_size_3_and_5_efficient_methods} (the latter table contains the efficient ensemble methods), we complete~\Cref{tab:mlp_lenet_results_ens_size_3} with the addition of the results for the ensembles of size 5. To ease the comparison across different ensemble sizes, we incorporate as well the results for the size 3. 

The conclusions highlighted in the main paper also hold for the larger ensembles of size 5. In~\Cref{tab:mlp_lenet_results_ens_size_3_and_5_efficient_methods}, we can observe that \batchhyperens{} with 5 members does not consistently improve upon its counterpart with 3 members. This trend is corrected if more training epochs are considered (see in~\Cref{tab:mlp_lenet_results_ens_size_3_and_5_with_ablation} the effect of twice as many training epochs).
\setlength{\tabcolsep}{4pt}
\begin{table}[t]
\caption{Comparison over CIFAR 100 and Fashion MNIST with MLP and LeNet architectures. The table reports means $\pm$ standard errors (over the 3 random seeds and pooled over the 2 tuning settings),  for ensemble approaches with 3 and 5 members (the \textit{efficient} approaches are compared separately in~\Cref{tab:mlp_lenet_results_ens_size_3_and_5_efficient_methods}).
``fixed init ens'' is a shorthand for \hyperens{}, i.e., a ``row'' in~\Cref{fig:initialization_vs_lambdas_and_example_trajectory}-(left).
Our method {\color{MidnightBlue}hyper-deep ensemble} improves upon deep ensemble (in~\Cref{sec:supp_mat_statistical_significance_mlp_lenet}, we assess the statistical significance of those improvements with a Wilcoxon signed-rank test, paired along settings, datasets and model types).}
\label{tab:mlp_lenet_results_ens_size_3_and_5_non_efficient_methods}
\centering
\resizebox{\textwidth}{!}{%
\begin{tabular}{@{}clcccccc@{}}
\toprule
 &  & fixed init ens~(3) & fixed init ens~(5) & {\color{MidnightBlue}hyper-deep ens~(3)} & {\color{MidnightBlue}hyper-deep ens~(5)} & deep ens~(3) & deep ens~(5) \\ \midrule \midrule
\multirow{4}{*}{\shortstack{cifar100 \\(mlp)}}
& nll $\, \downarrow$ & \textbf{2.943}\spm{0.010} & \textbf{2.920}\spm{0.007} & \textbf{2.953}\spm{0.058} & \textbf{2.919}\spm{0.041} & \textbf{2.969}\spm{0.057} & \textbf{2.946}\spm{0.041} \\
& acc $\, \uparrow$ & 0.287\spm{0.003} & \textbf{0.292}\spm{0.002} & \textbf{0.291}\spm{0.004} & \textbf{0.296}\spm{0.003} & 0.289\spm{0.003} & \textbf{0.292}\spm{0.004} \\
& brier $\, \downarrow$ & -0.161\spm{0.002} & -0.165\spm{0.001} & \textbf{-0.164}\spm{0.003} & \textbf{-0.169}\spm{0.002} & -0.160\spm{0.004} & -0.163\spm{0.003} \\
& ece $\,\downarrow$ & \textbf{0.029}\spm{0.007} & \textbf{0.025}\spm{0.006} & \textbf{0.022}\spm{0.007} & \textbf{0.023}\spm{0.005} & \textbf{0.038}\spm{0.014} & \textbf{0.035}\spm{0.007} \\ 
 \cmidrule(l){1-8}
\multirow{4}{*}{\shortstack{cifar100 \\(lenet)}}
& nll $\, \downarrow$ & \textbf{2.259}\spm{0.067} & \textbf{2.248}\spm{0.069} & \textbf{2.211}\spm{0.066} & \textbf{2.136}\spm{0.057} & \textbf{2.334}\spm{0.141} & \textbf{2.298}\spm{0.146} \\
& acc $\, \uparrow$ & 0.439\spm{0.008} & 0.445\spm{0.010} & 0.452\spm{0.007} & \textbf{0.466}\spm{0.006} & 0.421\spm{0.026} & 0.428\spm{0.027} \\
& brier $\, \downarrow$ & -0.301\spm{0.010} & \textbf{-0.305}\spm{0.012} & \textbf{-0.315}\spm{0.010} & \textbf{-0.330}\spm{0.008} & -0.282\spm{0.030} & -0.288\spm{0.031} \\
& ece $\,\downarrow$ & \textbf{0.049}\spm{0.023} & \textbf{0.045}\spm{0.021} & \textbf{0.039}\spm{0.013} & \textbf{0.034}\spm{0.008} & \textbf{0.050}\spm{0.015} & \textbf{0.045}\spm{0.022} \\ 
 \cmidrule(l){1-8}
\multirow{4}{*}{\shortstack{fmnist \\(mlp)}}
& nll $\, \downarrow$ & 0.312\spm{0.003} & \textbf{0.305}\spm{0.003} & 0.310\spm{0.001} & \textbf{0.305}\spm{0.001} & 0.319\spm{0.005} & 0.318\spm{0.006} \\
& acc $\, \uparrow$ & 0.893\spm{0.001} & \textbf{0.897}\spm{0.000} & 0.895\spm{0.001} & \textbf{0.897}\spm{0.000} & 0.889\spm{0.003} & 0.889\spm{0.003} \\
& brier $\, \downarrow$ & -0.843\spm{0.001} & \textbf{-0.848}\spm{0.001} & -0.845\spm{0.001} & \textbf{-0.848}\spm{0.001} & -0.839\spm{0.003} & -0.840\spm{0.003} \\
& ece $\,\downarrow$ & \textbf{0.012}\spm{0.005} & \textbf{0.014}\spm{0.002} & \textbf{0.014}\spm{0.003} & 0.017\spm{0.001} & \textbf{0.010}\spm{0.003} & \textbf{0.009}\spm{0.003} \\
 \cmidrule(l){1-8}
\multirow{4}{*}{\shortstack{fmnist \\(lenet)}}
& nll $\, \downarrow$ & 0.219\spm{0.002} & 0.215\spm{0.002} & 0.216\spm{0.002} & \textbf{0.210}\spm{0.002} & 0.226\spm{0.004} & 0.222\spm{0.005} \\
& acc $\, \uparrow$ & 0.924\spm{0.001} & \textbf{0.926}\spm{0.001} & \textbf{0.926}\spm{0.002} & \textbf{0.928}\spm{0.001} & 0.920\spm{0.002} & 0.921\spm{0.002} \\
& brier $\, \downarrow$ & -0.889\spm{0.001} & \textbf{-0.891}\spm{0.001} & \textbf{-0.890}\spm{0.002} & \textbf{-0.893}\spm{0.001} & -0.883\spm{0.003} & -0.884\spm{0.003} \\
& ece $\,\downarrow$ & \textbf{0.014}\spm{0.004} & \textbf{0.015}\spm{0.002} & 0.018\spm{0.002} & \textbf{0.014}\spm{0.003} & \textbf{0.013}\spm{0.004} & \textbf{0.011}\spm{0.003} \\
 \bottomrule
\end{tabular}%
}
\end{table}
\setlength{\tabcolsep}{4pt}
\begin{table}[t]
\caption{Comparison of the \textit{efficient} ensemble methods over CIFAR 100 and Fashion MNIST with MLP and LeNet architectures. The table reports means $\pm$ standard errors (over the 3 random seeds and pooled over the 2 tuning settings), for ensemble approaches with 3 and 5 members. Our method {\color{MidnightBlue}hyper-batch ensemble} improves upon batch ensemble (in~\Cref{sec:supp_mat_statistical_significance_mlp_lenet}, we assess the statistical significance of those improvements with a Wilcoxon signed-rank test, paired along settings, datasets and model types).}
\label{tab:mlp_lenet_results_ens_size_3_and_5_efficient_methods}
\centering
\resizebox{0.75\textwidth}{!}{%
\begin{tabular}{@{}clcccc@{}}
\toprule
 &  & {\color{MidnightBlue}hyper-batch ens~(3)} & {\color{MidnightBlue}hyper-batch ens~(5)} & batch ens~(3) & batch ens~(5) \\ \midrule \midrule
\multirow{4}{*}{\shortstack{cifar100 \\(mlp)}}
& nll $\, \downarrow$ & \textbf{2.979}\spm{0.004} & \textbf{2.983}\spm{0.001} & 3.015\spm{0.003} & 3.056\spm{0.004} \\
& acc $\, \uparrow$ & \textbf{0.281}\spm{0.002} & \textbf{0.282}\spm{0.001} & 0.275\spm{0.001} & 0.265\spm{0.001} \\
& brier $\, \downarrow$ & \textbf{-0.157}\spm{0.000} & \textbf{-0.157}\spm{0.000} & -0.153\spm{0.001} & -0.141\spm{0.000} \\
& ece $\,\downarrow$ & 0.030\spm{0.002} & 0.034\spm{0.001} & \textbf{0.022}\spm{0.002} & 0.033\spm{0.002} \\ 
 \cmidrule(l){1-6}
\multirow{4}{*}{\shortstack{cifar100 \\(lenet)}}
& nll $\, \downarrow$ & \textbf{2.283}\spm{0.016} & \textbf{2.297}\spm{0.009} & \textbf{2.350}\spm{0.024} & \textbf{2.239}\spm{0.027} \\
& acc $\, \uparrow$ & 0.428\spm{0.003} & 0.425\spm{0.002} & \textbf{0.438}\spm{0.003} & \textbf{0.437}\spm{0.006} \\
& brier $\, \downarrow$ & \textbf{-0.288}\spm{0.003} & -0.282\spm{0.002} & \textbf{-0.295}\spm{0.003} & \textbf{-0.296}\spm{0.008} \\
& ece $\,\downarrow$ & \textbf{0.058}\spm{0.004} & 0.069\spm{0.006} & \textbf{0.058}\spm{0.015} & \textbf{0.038}\spm{0.018} \\
 \cmidrule(l){1-6}
\multirow{4}{*}{\shortstack{fmnist \\(mlp)}}
& nll $\, \downarrow$ & 0.308\spm{0.002} & \textbf{0.304}\spm{0.001} & 0.351\spm{0.004} & 0.320\spm{0.002} \\
& acc $\, \uparrow$ & \textbf{0.892}\spm{0.001} & \textbf{0.892}\spm{0.001} & 0.884\spm{0.001} & \textbf{0.892}\spm{0.000} \\
& brier $\, \downarrow$ & \textbf{-0.844}\spm{0.001} & \textbf{-0.845}\spm{0.001} & -0.830\spm{0.001} & \textbf{-0.844}\spm{0.001} \\
& ece $\,\downarrow$ & 0.016\spm{0.001} & \textbf{0.013}\spm{0.001} & 0.020\spm{0.001} & 0.024\spm{0.001} \\ 
 \cmidrule(l){1-6}
\multirow{4}{*}{\shortstack{fmnist \\(lenet)}}
& nll $\, \downarrow$ & \textbf{0.212}\spm{0.001} & \textbf{0.209}\spm{0.002} & 0.230\spm{0.005} & 0.221\spm{0.002} \\
& acc $\, \uparrow$ & \textbf{0.924}\spm{0.001} & \textbf{0.925}\spm{0.001} & 0.920\spm{0.001} & 0.922\spm{0.001} \\
& brier $\, \downarrow$ & \textbf{-0.889}\spm{0.001} & \textbf{-0.891}\spm{0.001} & -0.883\spm{0.001} & -0.886\spm{0.001} \\
& ece $\,\downarrow$ & \textbf{0.009}\spm{0.001} & \textbf{0.008}\spm{0.001} & 0.017\spm{0.002} & 0.015\spm{0.001} \\ 
 \bottomrule
\end{tabular}%
}
\end{table}

\subsubsection{Assessment of the statistical significance of the results}\label{sec:supp_mat_statistical_significance_mlp_lenet}

To assess the statistical significance of the improvements displayed in \Cref{tab:mlp_lenet_results_ens_size_3_without_brier}, \Cref{tab:mlp_lenet_results_ens_size_3_and_5_non_efficient_methods} and \Cref{tab:mlp_lenet_results_ens_size_3_and_5_efficient_methods}, we run the Wilcoxon signed-rank test, paired along settings, datasets and model types. We report the results in~\Cref{tab:supp_mat_statistical_significance_mlp_lenet}. The pairing of the tests is especially important for the comparisons between \deepens{}, \hyperens{} and \strathyperens{} since their respective performances are heavily conditioned on the initial random searches they build upon. 

First, we can see that \strathyperens{} significantly improves upon both \deepens{} and \hyperens{} (with larger p-values in the latter case, though). 
Second, while \batchhyperens{} significantly improves upon \texttt{STN}, \batchhyperens{} can only be shown to be better than \batchens{} in terms of likelihood (with a 5\% significance level).
Overall, we also observe that we do not have significant improvements with respect to ECE which is known to be more noisy~\cite{nixon2019measuring}.

\begin{table}[h]
\vspace*{-0.0cm}
\caption{Results of the one-sided, Wilcoxon signed-rank test, paired along settings, datasets and model types. We report the p-values corresponding to the hypothesis that our method (in {\color{MidnightBlue}blue}) has worse value than the corresponding competing methods.}
\label{tab:supp_mat_statistical_significance_mlp_lenet}
\centering
\resizebox{\textwidth}{!}{%
\begin{tabular}{@{}l|cccc|cccc@{}}
                            & ens size & p-value (nll) & p-value (acc) & p-value (ece) & ens size & p-value (nll) & p-value (acc) & p-value (ece) \\ \midrule
\deepens{} $\leftrightarrow$ \strathyperens{} & 3        &    $1.1\times  10^{-5}$           &     $2.1\times 10^{-5}$   &  0.25    & 5        &      $9.1\times   10^{-6}$      &      $1.9\times  10^{-5}$   & 0.33   \\
\hyperens{} $\leftrightarrow$ \strathyperens{} & 3        &    0.0725           &     0.0017   &     0.43   & 5        &     0.0088         &     0.0018 & 0.44\\ 
\midrule
\midrule
\batchens{} $\leftrightarrow$ \batchhyperens{} & 3        &    $6.4\times  10^{-5}$           &     0.13   &  0.31    & 5        &      0.038      &      0.22   & 0.39   \\
\texttt{STN} $\leftrightarrow$ \batchhyperens{} & 3        &   $9.1\times  10^{-6}$          &     $2.6\times  10^{-5}$   &     0.23   & 5        &     $4.5 \times  10^{-5}$         &     $1.3\times  10^{-5}$ & 0.33
\end{tabular}
}
\vspace*{-0.0cm}
\end{table}

\clearpage
\subsubsection{Diversity analysis}\label{sec:supp_mat_mlp_lenet_diversity_analysis}

In this section, we study the diversity of the predictions made by the ensemble approaches from the experiments of~\Cref{sec:experiments_mlp_lenet}.

\setlength{\tabcolsep}{4pt}
\begin{table}[t]
\caption{Normalized predictive disagreement from~\cite{fort2019deep} compared over CIFAR 100 and Fashion MNIST with MLP and LeNet architectures. Higher values mean more diversity in the ensemble predictions. The table reports means $\pm$ standard errors (over the 3 random seeds and pooled over the 2 tuning settings), for ensemble approaches with 3 and 5 members.}
\label{tab:mlp_lenet_results_diversity_deep_vs_stratified_ens}
\resizebox{\textwidth}{!}{%
\begin{tabular}{@{}ccccc||cccc@{}}
\toprule
 &  deep ens~(3) & deep ens~(5) & {\color{MidnightBlue}hyper-deep ens~(3)} & {\color{MidnightBlue}hyper-deep ens~(5)} & batch ens~(3) & batch ens~(5) & {\color{MidnightBlue}hyper-batch ens~(3)} & {\color{MidnightBlue}hyper-batch ens~(5)} \\ \midrule \midrule
\multirow{3}{*}{\shortstack{cifar100 \\(mlp)}}
\\[-0.2em]
 &  0.570\spm{0.099} & 0.573\spm{0.103} & \textbf{0.707}\spm{0.072} & \textbf{0.732}\spm{0.055} & 0.700\spm{0.003} & 0.453\spm{0.010} & 0.765\spm{0.004} & \textbf{0.841}\spm{0.004} \\[-0.4em]
\\\cmidrule(l){1-9}
\multirow{3}{*}{\shortstack{cifar100 \\(lenet)}}
\\[-0.2em]
 &  0.688\spm{0.107} & 0.695\spm{0.114} & \textbf{0.896}\spm{0.045} & \textbf{0.896}\spm{0.038} & \textbf{0.692}\spm{0.028} & 0.583\spm{0.034} & \textbf{0.716}\spm{0.005} & 0.479\spm{0.011}\\[-0.4em]
 \\ \cmidrule(l){1-9}
\multirow{3}{*}{\shortstack{fmnist \\(mlp)}}
\\[-0.2em]
 &  0.461\spm{0.063} & 0.457\spm{0.040} & 0.588\spm{0.046} & \textbf{0.702}\spm{0.057} & 0.490\spm{0.014} & \textbf{0.716}\spm{0.003} & 0.509\spm{0.009} & 0.573\spm{0.008} \\[-0.4em]
 \\ \cmidrule(l){1-9}
\multirow{3}{*}{\shortstack{fmnist \\(lenet)}}
\\[-0.2em]
 &  0.475\spm{0.057} & 0.479\spm{0.060} & \textbf{0.594}\spm{0.041} & \textbf{0.656}\spm{0.043} & 0.481\spm{0.047} & \textbf{0.647}\spm{0.015} & 0.446\spm{0.015} & 0.487\spm{0.008}
 \\[-0.4em]
 \\ \bottomrule
\end{tabular}%
}
\end{table}

To this end, we use the predictive disagreement metric from~\cite{fort2019deep}.
This metric is based on the average of the pairwise comparisons of the predictions across the ensemble members. For a given pair of members, it is zero when they are making identical predictions, and one when all their predictions differ. We also normalize the diversity
metric by the error rate (i.e., one minus the accuracy) to avoid the case where random predictions provide the best diversity.

For ensemble sizes 3 and 5, we compare in~\Cref{tab:mlp_lenet_results_diversity_deep_vs_stratified_ens} the approaches hyper-deep ensemble, deep ensemble, hyper-batch ensemble and batch ensemble with respect to this metric. We can draw the following conclusions:
\begin{itemize}
    \item \textbf{hyper-deep ensemble vs.~deep ensemble:} Compared to deep ensemble, we can observe that hyper-deep ensemble leads to significantly more diverse predictions, across all combination of (dataset, model type) and ensemble sizes. Moreover, we can also see that the diversity only slightly increases for deep ensemble going from 3 to 5 members, while it increases more markedly for hyper-deep ensemble. We hypothesise this is due to the more diverse set of models (with varied initialization and hyperparameters) that hyper-deep ensemble can tap into.
    \item \textbf{hyper-batch ensemble vs.~batch ensemble:} The first observation is that in this setting (the observation turns out to be different in the case of the Wide Resnet 28-10 experiments), batch ensemble leads to the largest diversity in predictions compared to all the other methods. Although lower compared with batch ensemble, the diversity of hyper-batch ensemble is typically higher than, or competitive with the diversity of deep ensembles. 
\end{itemize}

\clearpage
\subsubsection{Further comparison between batch ensemble and hyper-batch ensemble}\label{sec:supp_mat_deep_batch_ens}

As described in \Cref{sec:batch_hyperparameter_ensemble_layer}, the structure of the layers of \batchhyperens{} leads to a 2x increase in memory compared with standard \batchens{}.

\setlength{\tabcolsep}{3pt}
\begin{table}[t]
\caption{Comparison of batch hyperparameter ensemble and batch ensemble over CIFAR 100 and Fashion MNIST with MLP and LeNet models, while accounting for the number of parameters. The table reports means $\pm$ standard errors (over the 3 random seeds and pooled over the 2 tuning settings), for ensemble approaches with 3 and 5 members. ``2x-'' indicates the method benefited from twice as many training epochs.
The two rightmost columns correspond to the combination of two batch ensemble models with 3 and 5 members, resulting in 6 and 10 members.}
\label{tab:mlp_lenet_results_ens_size_3_and_5_with_ablation}
\resizebox{\textwidth}{!}{
\begin{tabular}{@{}clcc|cccc@{}}
\toprule
 &  & hyper-batch ens~(3) & hyper-batch ens~(5) & 2x-hyper-batch ens~(3) &  2x-hyper-batch ens~(5) &  batch ens~(3$\times$2) & batch ens~(5$\times$2) \\ \midrule \midrule
\multirow{4}{*}{\shortstack{cifar100 \\(mlp)}}
& nll $\, \downarrow$ & 2.979\spm{0.004} & 2.983\spm{0.001} & 2.974\spm{0.006} & \textbf{2.950}\spm{0.003} & 2.980\spm{0.002} & 3.031\spm{0.002} \\
& acc $\, \uparrow$ & \textbf{0.281}\spm{0.002} & \textbf{0.282}\spm{0.001} & 0.277\spm{0.003} & \textbf{0.284}\spm{0.002} & \textbf{0.282}\spm{0.001} & 0.268\spm{0.001} \\
& brier $\, \downarrow$ & -0.157\spm{0.000} & -0.157\spm{0.000} & -0.153\spm{0.001} & \textbf{-0.159}\spm{0.001} & -0.157\spm{0.000} & -0.144\spm{0.000} \\
& ece $\,\downarrow$ & \textbf{0.030}\spm{0.002} & \textbf{0.034}\spm{0.001} & \textbf{0.033}\spm{0.004} & \textbf{0.034}\spm{0.005} & \textbf{0.032}\spm{0.001} & 0.040\spm{0.002} \\ 
 \cmidrule(l){1-8}
\multirow{4}{*}{\shortstack{cifar100 \\(lenet)}}
& nll $\, \downarrow$ & 2.283\spm{0.016} & 2.297\spm{0.009} & 2.255\spm{0.014} & 2.269\spm{0.006} & 2.188\spm{0.008} & \textbf{2.163}\spm{0.012} \\
& acc $\, \uparrow$ & 0.428\spm{0.003} & 0.425\spm{0.002} & 0.430\spm{0.003} & 0.428\spm{0.001} & \textbf{0.460}\spm{0.002} & 0.451\spm{0.003} \\
& brier $\, \downarrow$ & -0.288\spm{0.003} & -0.282\spm{0.002} & -0.295\spm{0.002} & -0.291\spm{0.001} & \textbf{-0.321}\spm{0.001} & -0.309\spm{0.004} \\
& ece $\,\downarrow$ & 0.058\spm{0.004} & 0.069\spm{0.006} & 0.028\spm{0.001} & 0.036\spm{0.006} & \textbf{0.017}\spm{0.004} & 0.060\spm{0.009} \\ 
 \cmidrule(l){1-8}
\multirow{4}{*}{\shortstack{fmnist \\(mlp)}}
& nll $\, \downarrow$ & 0.308\spm{0.002} & 0.304\spm{0.001} & 0.307\spm{0.001} & \textbf{0.303}\spm{0.001} & 0.333\spm{0.003} & 0.308\spm{0.001} \\
& acc $\, \uparrow$ & 0.892\spm{0.001} & 0.892\spm{0.001} & \textbf{0.893}\spm{0.001} & \textbf{0.894}\spm{0.001} & 0.887\spm{0.001} & \textbf{0.894}\spm{0.001} \\
& brier $\, \downarrow$ & -0.844\spm{0.001} & \textbf{-0.845}\spm{0.001} & \textbf{-0.845}\spm{0.001} &\textbf{ -0.847}\spm{0.001} & -0.836\spm{0.001} & \textbf{-0.847}\spm{0.000} \\
& ece $\,\downarrow$ & 0.016\spm{0.001} & \textbf{0.013}\spm{0.001} & \textbf{0.015}\spm{0.001} & \textbf{0.013}\spm{0.001} & 0.016\spm{0.001} & 0.020\spm{0.001} \\ \cmidrule(l){1-8}
\multirow{4}{*}{\shortstack{fmnist \\(lenet)}}
& nll $\, \downarrow$ & 0.212\spm{0.001} & \textbf{0.209}\spm{0.002} & \textbf{0.211}\spm{0.002} & \textbf{0.209}\spm{0.001} & 0.220\spm{0.001} & 0.213\spm{0.001} \\
& acc $\, \uparrow$ & \textbf{0.924}\spm{0.001} & \textbf{0.925}\spm{0.001} & \textbf{0.925}\spm{0.001} & \textbf{0.925}\spm{0.000} & 0.922\spm{0.000} & \textbf{0.923}\spm{0.001} \\
& brier $\, \downarrow$ & \textbf{-0.889}\spm{0.001} & \textbf{-0.891}\spm{0.001} & \textbf{-0.890}\spm{0.001} & \textbf{-0.891}\spm{0.001} & -0.887\spm{0.000} & -0.889\spm{0.001} \\
& ece $\,\downarrow$ & \textbf{0.009}\spm{0.001} & \textbf{0.008}\spm{0.001} & 0.013\spm{0.001} & 0.012\spm{0.001} & 0.013\spm{0.001} & 0.011\spm{0.001} \\ \bottomrule
\end{tabular}%
}
\end{table}

In an attempt to fairly account for this difference in memory footprints, we combine two batch ensemble models trained separately and whose total memory footprint amounts to that of \batchhyperens{}. This procedure leads to ensembles with 6 and 10 members to compare to \batchhyperens{} instantiated with 3 and 5 members respectively. 
To also normalize the training budget, \batchhyperens{} is given twice as many training epochs as each of the \batchens{} models.

\Cref{tab:mlp_lenet_results_ens_size_3_and_5_with_ablation} presents the results of that comparison. In an nutshell, \batchhyperens{} either continues to improve upon, or remain competitive with, \batchens{}, while still having the advantage of automatically tuning the hyperparameters of the underlying model (MLP or LeNet). 

\subsubsection{Ablation study about hyper-deep ensemble}\label{sec:supp_mat_stratified_hyper_ens_ablation_top_k}

In this section, we conduct two ablation studies about hyper-deep ensemble to better understand its components. 
We first focus on the effect of using the greedy algorithm of~\cite{Caruana2004} compared with the top-$K$ procedure used in~\cite{saikia2020optimized}. Second, we relate Algorithm~\ref{alg:stratified_hyper_ens} to the NES-RS procedure concurrently proposed by~\cite{zaidi2020neural}.

\paragraph{Greedy~\cite{Caruana2004} versus top-$K$ selection?}

Starting from the set of models generated by random search (according to the setting of Section~\ref{sec:experiments_mlp_lenet}), we apply both the greedy and top-$K$ selection strategies, as previously used in~\cite{saikia2020optimized}, to form ensembles of size 5.
We report the results of the evaluations of those strategies in \Cref{fig:sup_mat_ablation_greedy_vs_topk}.

We can observe that the greedy procedure outperforms the top-$K$ procedure. While the former has an objective aware of the \textit{ensemble performance}, the latter selects the models based only on their individual performance. 

\begin{figure}[t]
\vspace{-0.0cm}
\centering
\resizebox{0.49\textwidth}{!}{
\includegraphics[scale=0.5]{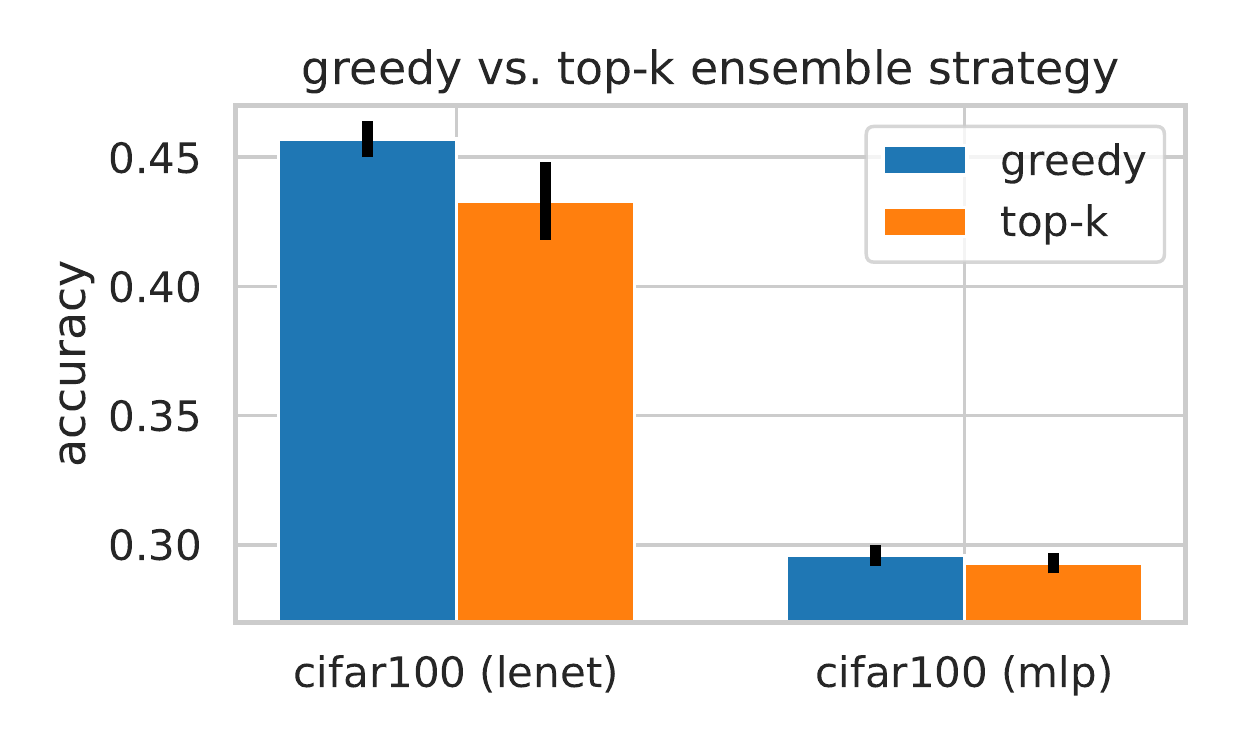}
}
\resizebox{0.49\textwidth}{!}{
\includegraphics[scale=0.5]{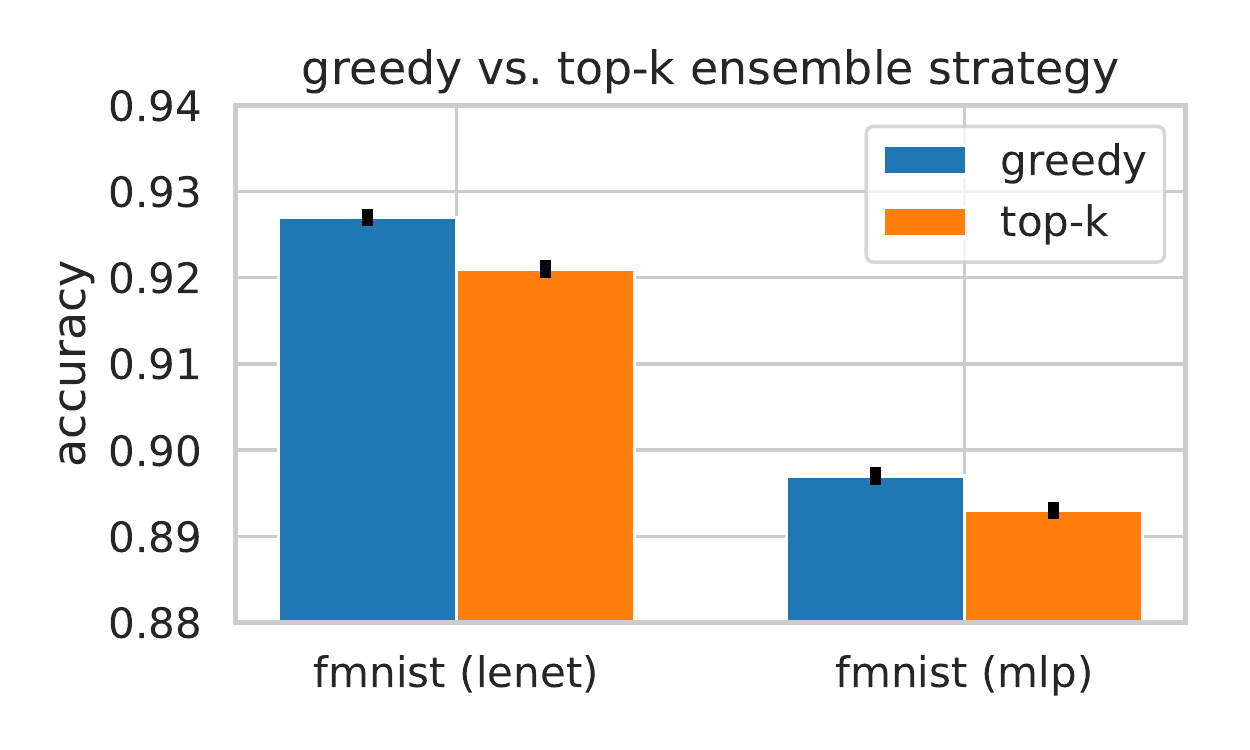}
}
\vspace{-0.0cm}%
\caption{Test accuracy evaluated over CIFAR 100 (\textsc{left}) and Fashion MNIST (\textsc{right}) for both MLP and LeNet models, when using the greedy and top-$K$ selection strategies to construct ensembles with 5 members. The accuracy is averaged over tuning settings and random seeds.}
\label{fig:sup_mat_ablation_greedy_vs_topk}%
\vspace{-0.0cm}%
\end{figure}

\paragraph{More models from random search versus fewer models with stratification?}

We still focus on the setting of Section~\ref{sec:experiments_mlp_lenet}, with ensembles of size 3 and 5. We study the value of the stratification step in Algorithm~\ref{alg:stratified_hyper_ens}. To this end, we consider the following comparison that accounts for the total number of trained models:
\begin{itemize}
    \item[\textbf{(A)}] \texttt{hyper\!~ens\!~(70)}: Random search with 70 models followed by the greedy procedure of~\cite{Caruana2004}. Note that there is no stratification step in this variant. The resulting method falls back to NES-RS from~\cite{zaidi2020neural} where the architecture is kept fixed while hyperparameters are varied.
    \item[\textbf{(B)}] \strathyperens{}: The procedure described in Algorithm~\ref{alg:stratified_hyper_ens} that uses stratification and starts from 50 models obtained by random search (as used in the experiments of Section~\ref{sec:experiments_mlp_lenet}). Note that even though we need to stratify 5 models with 5 seeds, i.e., $5^2$=25 models, we can reuse 5 models from the initial random search so that the total budget is 50+20=70 models to train (plus the cost of the calls to the greedy algorithm which is assumed negligible). The two approaches \textbf{(A)}-\textbf{(B)} therefore involve the same number of models to train.
\end{itemize}
The results of the comparison are reported in \Cref{tab:sup_mat_ablation_with_vs_without_stratification}. While \strathyperens{} works slightly better, the differences with \texttt{hyper\!~ens\!~(70)} are not substantial. In the setting of Section~\ref{sec:experiments_mlp_lenet}, it thus appears that, provided that the initial random search produces enough models, the stratification step may be bypassed. In practice, this scheme, without stratification, can also be more convenient to implement.

\begin{table}[t]
\caption{Study of the impact of the stratification when accounting for the total number of models to train. \strathyperens{} uses stratification while \texttt{hyper\!~ens\!~(70)} does not. The comparison is over CIFAR 100 and Fashion MNIST with MLP and LeNet models. The table reports means $\pm$ standard errors (over the 3 random seeds and pooled over the 2 tuning settings).}
\label{tab:sup_mat_ablation_with_vs_without_stratification}
\centering
\resizebox{\textwidth}{!}{
\begin{tabular}{lccccc}
\toprule
        &  ens size &                                                                      cifar100 (lenet) &                                                                        cifar100 (mlp) &                                                                        fmnist (lenet) &                                                                          fmnist (mlp) \\
\midrule
 hyper ens (70) &         3 &  \makecell{ce: $2.214\pm{0.054}$ \\ acc: $0.451\pm{0.006}$ \\ ece: $0.039\pm{0.009}$} &  \makecell{ce: $2.957\pm{0.047}$ \\ acc: $0.291\pm{0.002}$ \\ ece: $0.033\pm{0.008}$} &  \makecell{ce: $0.216\pm{0.003}$ \\ acc: $0.926\pm{0.001}$ \\ ece: $0.016\pm{0.003}$} &  \makecell{ce: $0.310\pm{0.003}$ \\ acc: $0.894\pm{0.001}$ \\ ece: $0.015\pm{0.002}$} \\
 & & & & &\\
 {\color{MidnightBlue}hyper-deep ens} &         3 &  \makecell{ce: $2.211\pm{0.066}$ \\ acc: $0.452\pm{0.007}$ \\ ece: $0.039\pm{0.013}$} &  \makecell{ce: $2.953\pm{0.058}$ \\ acc: $0.291\pm{0.004}$ \\ ece: $0.022\pm{0.007}$} &  \makecell{ce: $0.216\pm{0.002}$ \\ acc: $0.926\pm{0.002}$ \\ ece: $0.018\pm{0.002}$} &  \makecell{ce: $0.310\pm{0.001}$ \\ acc: $0.895\pm{0.001}$ \\ ece: $0.014\pm{0.003}$} \\
\bottomrule
\end{tabular}
}
\resizebox{\textwidth}{!}{
\begin{tabular}{lccccc}
\toprule
        &  ens size &                                                                      cifar100 (lenet) &                                                                        cifar100 (mlp) &                                                                        fmnist (lenet) &                                                                          fmnist (mlp) \\
\midrule
 hyper ens (70) &         5 &  \makecell{ce: $2.182\pm{0.053}$ \\ acc: $0.459\pm{0.005}$ \\ ece: $0.033\pm{0.005}$} &  \makecell{ce: $2.924\pm{0.035}$ \\ acc: $0.297\pm{0.002}$ \\ ece: $0.024\pm{0.004}$} &  \makecell{ce: $0.210\pm{0.001}$ \\ acc: $0.928\pm{0.001}$ \\ ece: $0.014\pm{0.002}$} &  \makecell{ce: $0.305\pm{0.002}$ \\ acc: $0.897\pm{0.001}$ \\ ece: $0.018\pm{0.004}$} \\
  & & & & &\\
 {\color{MidnightBlue}hyper-deep ens} &         5 &  \makecell{ce: $2.136\pm{0.057}$ \\ acc: $0.466\pm{0.006}$ \\ ece: $0.034\pm{0.008}$} &  \makecell{ce: $2.919\pm{0.041}$ \\ acc: $0.296\pm{0.003}$ \\ ece: $0.023\pm{0.005}$} &  \makecell{ce: $0.210\pm{0.002}$ \\ acc: $0.928\pm{0.001}$ \\ ece: $0.014\pm{0.003}$} &  \makecell{ce: $0.305\pm{0.001}$ \\ acc: $0.897\pm{0.000}$ \\ ece: $0.017\pm{0.001}$} \\
\bottomrule
\end{tabular}
}
\end{table}

\subsubsection{Addendum to the results of~\Cref{tab:mlp_lenet_results_ens_size_3_without_brier}}

In \Cref{tab:mlp_lenet_results_ens_size_3}, we complete the results of~\Cref{tab:mlp_lenet_results_ens_size_3_without_brier} with the addition of the Brier scores.
Moreover, we provide the details of the performance of \randsearch{} and \bayesopt{} since only their aggregated best results were reported in~\Cref{tab:mlp_lenet_results_ens_size_3_without_brier}.

\setlength{\tabcolsep}{4pt}
\begin{table}[t]
\caption{Comparison over CIFAR 100 and Fashion MNIST with MLP and LeNet architectures. The table reports means $\pm$ standard errors (over the 3 random seeds and pooled over the 2 tuning settings).
``fixed init ens'' is a shorthand for \hyperens{}, i.e., a ``row'' in~\Cref{fig:initialization_vs_lambdas_and_example_trajectory}-(left).
We separately compare the \textit{efficient} methods (3 rightmost columns) and we mark in bold the best results (within one standard error).  
Our two methods {\color{MidnightBlue}hyper-deep/hyper-batch ensembles} improve upon deep/batch ensembles respectively (in~\Cref{sec:supp_mat_statistical_significance_mlp_lenet}, we assess the statistical significance of those improvements with a Wilcoxon signed-rank test, paired by settings, datasets and model types).}
\label{tab:mlp_lenet_results_ens_size_3}
\resizebox{\textwidth}{!}{%
\begin{tabular}{@{}clccccc|ccc@{}}
\toprule
 &  & rand~search (1) & Bayes~opt~(1) & fixed init ens~(3) &  {\color{MidnightBlue}hyper-deep ens~(3)} & deep ens~(3) & batch ens~(3) & STN (1) &  {\color{MidnightBlue}hyper-batch ens~(3)} \\ \midrule \midrule
\multirow{4}{*}{\shortstack{cifar100 \\(mlp)}} 
& nll $\, \downarrow$ & \textbf{3.082}\spm{0.127} & 2.977\spm{0.010} & \textbf{2.943}\spm{0.010} & \textbf{2.953}\spm{0.058} & \textbf{2.969}\spm{0.057} & 3.015\spm{0.003} & 3.029\spm{0.006} & \textbf{2.979}\spm{0.004} \\
& acc $\, \uparrow$ & 0.272\spm{0.003} & 0.277\spm{0.002} & \textbf{0.287}\spm{0.003} & \textbf{0.291}\spm{0.004} & \textbf{0.289}\spm{0.003} & 0.275\spm{0.001} & 0.268\spm{0.002} & \textbf{0.281}\spm{0.002} \\
& brier $\, \downarrow$ & -0.142\spm{0.016} & -0.152\spm{0.003} & \textbf{-0.161}\spm{0.002} & \textbf{-0.164}\spm{0.003} & \textbf{-0.160}\spm{0.004} & -0.153\spm{0.001} & -0.145\spm{0.001} & \textbf{-0.157}\spm{0.000} \\
& ece $\,\downarrow$ & \textbf{0.048}\spm{0.037} & \textbf{0.034}\spm{0.008} & \textbf{0.029}\spm{0.007} & \textbf{0.022}\spm{0.007} & \textbf{0.038}\spm{0.014} & \textbf{0.022}\spm{0.002} & 0.033\spm{0.004} & 0.030\spm{0.002} \\ 
 \cmidrule(l){1-10}
\multirow{4}{*}{\shortstack{cifar100 \\(lenet)}} 
& nll $\, \downarrow$ & 2.523\spm{0.140} & \textbf{2.399}\spm{0.204} & \textbf{2.259}\spm{0.067} & \textbf{2.211}\spm{0.066} & \textbf{2.334}\spm{0.141} & 2.350\spm{0.024} & 2.329\spm{0.017} & \textbf{2.283}\spm{0.016} \\
& acc $\, \uparrow$ & 0.395\spm{0.026} & 0.420\spm{0.011} & \textbf{0.439}\spm{0.008} & \textbf{0.452}\spm{0.007} & \textbf{0.421}\spm{0.026} & \textbf{0.438}\spm{0.003} & 0.415\spm{0.003} & 0.428\spm{0.003} \\
& brier $\, \downarrow$ & -0.249\spm{0.028} & -0.270\spm{0.029} & \textbf{-0.301}\spm{0.010} & \textbf{-0.315}\spm{0.010} & \textbf{-0.282}\spm{0.030} & \textbf{-0.295}\spm{0.003} & -0.280\spm{0.002} & -0.288\spm{0.003} \\
& ece $\,\downarrow$ & \textbf{0.064}\spm{0.036} & \textbf{0.071}\spm{0.054} & \textbf{0.049}\spm{0.023} & \textbf{0.039}\spm{0.013} & \textbf{0.050}\spm{0.015} & 0.058\spm{0.015} & \textbf{0.024}\spm{0.007} & 0.058\spm{0.004} \\ 
 \cmidrule(l){1-10}
\multirow{4}{*}{\shortstack{fmnist \\(mlp)}}
& nll $\, \downarrow$ & 0.327\spm{0.005} & 0.323\spm{0.003} & \textbf{0.312}\spm{0.003} & \textbf{0.310}\spm{0.001} & 0.319\spm{0.005} & 0.351\spm{0.004} & 0.316\spm{0.003} & \textbf{0.308}\spm{0.002} \\
& acc $\, \uparrow$ & 0.888\spm{0.002} & 0.889\spm{0.002} & \textbf{0.893}\spm{0.001} & \textbf{0.895}\spm{0.001} & 0.889\spm{0.003} & 0.884\spm{0.001} & \textbf{0.890}\spm{0.001} & \textbf{0.892}\spm{0.001} \\
& brier $\, \downarrow$ & -0.836\spm{0.003} & -0.838\spm{0.002} & \textbf{-0.843}\spm{0.001} & \textbf{-0.845}\spm{0.001} & -0.839\spm{0.003} & -0.830\spm{0.001} & -0.840\spm{0.002} & \textbf{-0.844}\spm{0.001} \\
& ece $\,\downarrow$ & \textbf{0.013}\spm{0.003} & 0.022\spm{0.004} & \textbf{0.012}\spm{0.005} & \textbf{0.014}\spm{0.003} & \textbf{0.010}\spm{0.003} & 0.020\spm{0.001} & \textbf{0.016}\spm{0.001} & \textbf{0.016}\spm{0.001} \\
 \cmidrule(l){1-10}
\multirow{4}{*}{\shortstack{fmnist \\(lenet)}}
& nll $\, \downarrow$ & 0.232\spm{0.002} & 0.237\spm{0.002} & \textbf{0.219}\spm{0.002} & \textbf{0.216}\spm{0.002} & 0.226\spm{0.004} & 0.230\spm{0.005} & 0.224\spm{0.003} & \textbf{0.212}\spm{0.001} \\
& acc $\, \uparrow$ & 0.919\spm{0.001} & 0.918\spm{0.002} & \textbf{0.924}\spm{0.001} & \textbf{0.926}\spm{0.002} & 0.920\spm{0.002} & 0.920\spm{0.001} & 0.920\spm{0.001} & \textbf{0.924}\spm{0.001} \\
& brier $\, \downarrow$ & -0.881\spm{0.001} & -0.879\spm{0.002} & -0.889\spm{0.001} & -0.890\spm{0.002} & -0.883\spm{0.003} & -0.883\spm{0.001} & -0.884\spm{0.001} & -0.889\spm{0.001} \\
& ece $\,\downarrow$ & \textbf{0.019}\spm{0.004} & \textbf{0.017}\spm{0.005} & \textbf{0.014}\spm{0.004} & \textbf{0.018}\spm{0.002} & \textbf{0.013}\spm{0.004} & 0.017\spm{0.002} & 0.015\spm{0.001} & \textbf{0.009}\spm{0.001} \\
 \bottomrule
\end{tabular}%
}
\end{table}

\clearpage
\section{Further details about the ResNet experiments}
\label{sec:supp_resnet_details}

\subsection{Details about the optimization methods}
\label{sec:supp_resnet_optimizer_params}

We first explain the setting we used for training the ResNet 20 and Wide ResNet 28-10 architectures in \Cref{sec:experiments} and conclude with the results of an empirical study over different algorithmic choices.

\paragraph{Training and model definition.}
In the following we present the details for our training procedures.
A similar training setup to ours for \batchens{} based on a Wide ResNet architecture can be found in the \texttt{uncertainty-baselines} repository\footnote{\url{https://github.com/google/uncertainty-baselines/tree/master/baselines/cifar}}.

For all methods (\batchhyperens{}, \batchens{}, \strathyperens{} and \deepens{}), we optimize the model parameters using stochastic gradient descent (SGD) with Nesterov momentum of $0.9$. For the ResNet 20 model we decay the learning rate by a factor of 0.1 after the epochs $\{80, 180, 200\}$ and for the Wide ResNet 28-10 model by a factor of 0.2 after the epochs $\{100, 200, 225\}$. For tuning the hyperparameters in \batchhyperens{}, we use Adam~\cite{kingma2013auto} with a fixed learning rate.
For \batchhyperens{}, we use 95\% of the data for training and the remaining 5\% for optimizing the hyperparameters $\lambdab$ in the tuning step. For the other methods we use the full training set.

For the efficient ensemble methods (\batchhyperens{} and \batchens{}), we initialize the rank-1 factors, i.e., $\rb_k \sbb_k^\top$ and $\ub_k \vb_k^\top$ in (\ref{eq:dense_layer_batch_stn}), with entries independently sampled according to $\Ncal(1, 0.5)$ for ResNet 20 and sampled according to $\Ncal(1, 1)$ for Wide ResNet 28-10.

We make two minor adjustments of our model to adapt to the specific structure of the highly overparametrized ResNet models. First, we find that \textit{coupling} the rank-1 factors corresponding to the hyperparamters to the rank-1 factors of weights is beneficial, i.e. we set $\ub_k:=\rb_k$ and $\vb_k:=\sbb_k$. This slightly decreases the flexibility of \batchhyperens{} and makes it more robust against overfitting. 

Second, we exclude the rank-1 factors from being regularized. In the original paper introducing \batchens{} \cite{wen2020batchensemble}, the authors mention that both options were found to work equally well and they finally choose \textit{not to regularize} the rank-1 factors (to save extra computation). In our setting, we observe that this choice is important and regularizing the rank-1 factors leads to worse performance (a detailed analysis is given in \Cref{sec:supp_resnet_reg_ablation}). Hence, we do \textit{not} include the rank-1 factors in the regularization.

\setlength{\tabcolsep}{4pt}
\begin{table}[h]
\caption{Wide ResNet 28-10. Ablation for including the rank-1 factors of the efficient ensemble methods into the regularization. We run a grid search over all optimization parameters outlined in \Cref{sec:supp_resnet_optimizer_params} and report the mean performance on CIFAR-100 along with the standard error as well as the best performance attained by all configurations considered. Regularizing the factors substantially decreases the performance of both methods. The results for the unregularized version can be found in the main text, in \Cref{tab:wide_resnet}.}
\label{tab:reg_ablation}
\begin{center}
\begin{tabular}{lcccc}
\toprule
 &  Mean acc. & Max. acc. & Mean NLL & Min. NLL \\ \midrule \midrule
\batchhyperens{} & 0.797\spm{0.004} & 0.802  & 0.783\spm{0.023} & 0.750 \\
\cmidrule(l){1-5}
\batchens{} & 0.797\spm{0.004} & 0.803 & 0.782\spm{0.028} &  0.750 \\
\bottomrule
\end{tabular}%
\end{center}
\end{table}

For \batchhyperens{} we usually start with a log-uniform distribution over the hyperparameters $p_t$ over the full range for the given bounds of the hyperparameters. For the ResNet models we find that reducing the initial ranges of $p_t$ for the $L_2$ regularization parameters by one order of magnitude is more stable (but we keep the original bounds for clipping the parameters).

\paragraph{Tuning of optimization method hyperparameters.} We perform an exhaustive ablation of the different algorithmic choices for \batchhyperens{} as well as for \batchens{} using the validation set. We run a grid search procedure evaluating up to five different values for each parameter listed below and repeat each run three times using different seeds.
We find that the following configuration works best.

Shared parameters for both methods: 
\begin{itemize}
    \item The base learning rate of SGD: $0.1$.
    \item Learning rate decay ratio: $0.2$.
    \item Batch size: $64$.
    \item Initialization of each entry of the fast weights according to $\Ncal(1, 1)$.
    \item We multiply the learning rate for the fast weights by: $2.0$.
\end{itemize}

Parameters specific to hyper-batch ensemble:
\begin{itemize}
    \item Range for the $L_2$ parameters: $[0.1, 100]$.
    \item Range for the label smoothing parameter: $[0, 0.2]$.
    \item Entropy regularization parameter: $\tau=10^{-3}$ (as also used in the other experiments and used by \cite{mackay2019self}).
    \item Learning rate for the tuning step (where we use Adam): $10^{-5}$.
\end{itemize}

Remarkably, we find that the shared set of parameters which work best for batch ensembles, also work best for hyper-batch ensembles. This makes our method an easy-to-tune drop-in replacement for batch ensembles.

\subsection{Regularization of the rank-1 factors}
\label{sec:supp_resnet_reg_ablation}

As explained in the previous section, we find that for the Wide ResNet architecture, both hyper-batch ensemble and batch ensemble work best when the rank-1 factors ($\rb_k \sbb_k^\top$ and $\ub_k \vb_k^\top$) are not regularized. We examine the performance of both models when \textit{using} a regularization of the rank-1 factors. For these versions of the models, we run an ablation over the same algorithmic choices as done in the previous section. The results are displayed in \Cref{tab:reg_ablation}. The performance of both methods is substantially worse than the unregularized versions as presented in the main text, \Cref{tab:wide_resnet}.

\begin{figure}[t]
\vspace{-0.0cm}
\resizebox{\textwidth}{!}{
\includegraphics[scale=0.75]{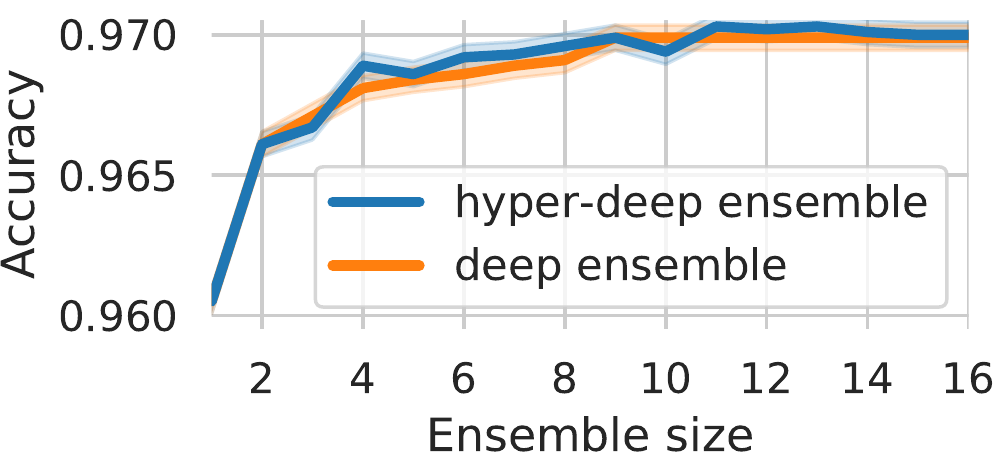}
\includegraphics[scale=0.75]{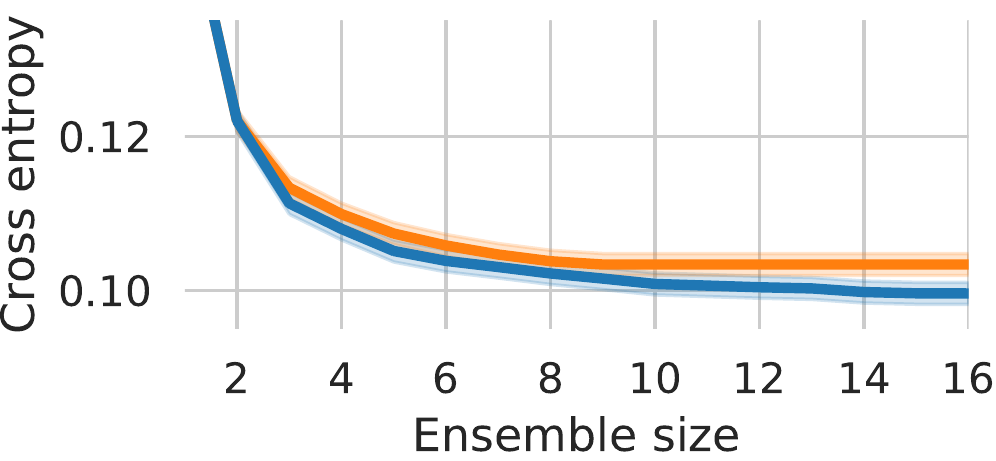}
}
\resizebox{\textwidth}{!}{
\includegraphics[scale=0.75]{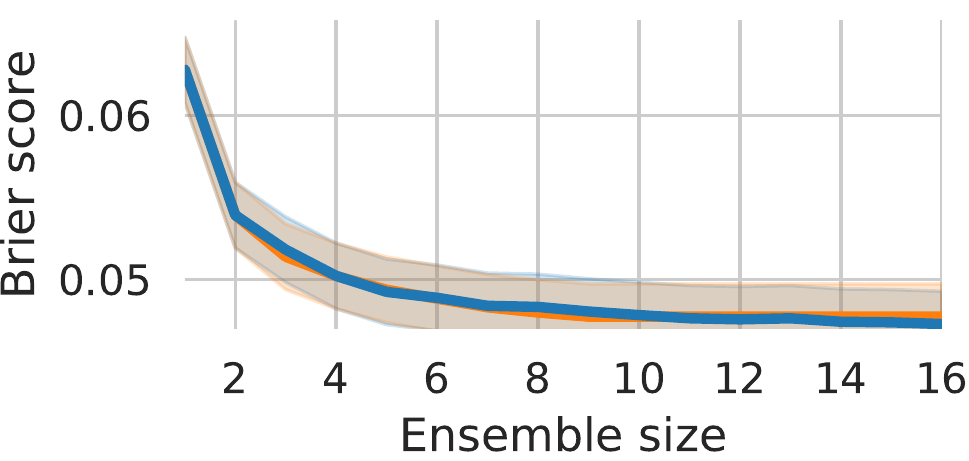}
\includegraphics[scale=0.75]{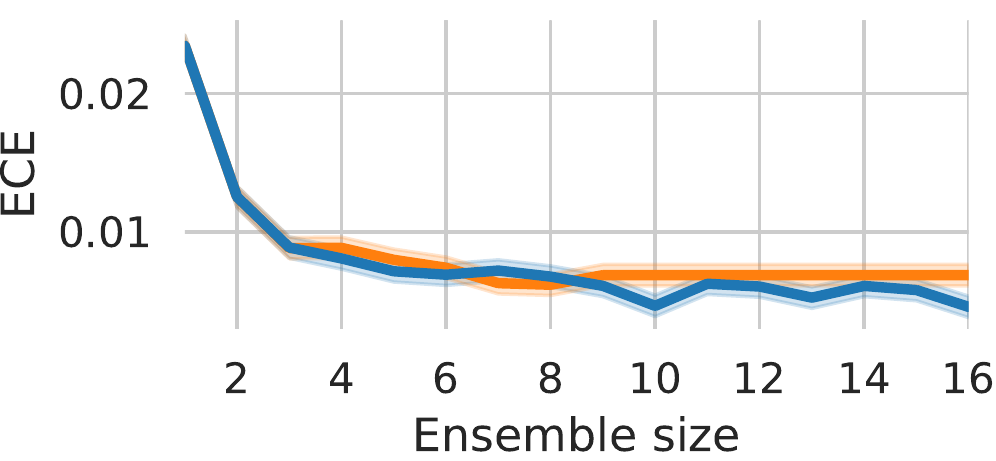}
}
\vspace{-0.5cm}%
\caption{CIFAR-10. Comparison of our hyper-deep ensemble with deep ensemble, for different ensemble sizes in terms of cross entropy (negative log-likelihood), accuracy, Brier score and expected calibration error for a Wide ResNet 28-10 over CIFAR-10.}%
\label{fig:str_hyper_ens_cifar10}%
\vspace{-0.0cm}%
\end{figure}%

\begin{figure}[t]
\vspace{-0.0cm}
\resizebox{\textwidth}{!}{
\includegraphics[scale=0.75]{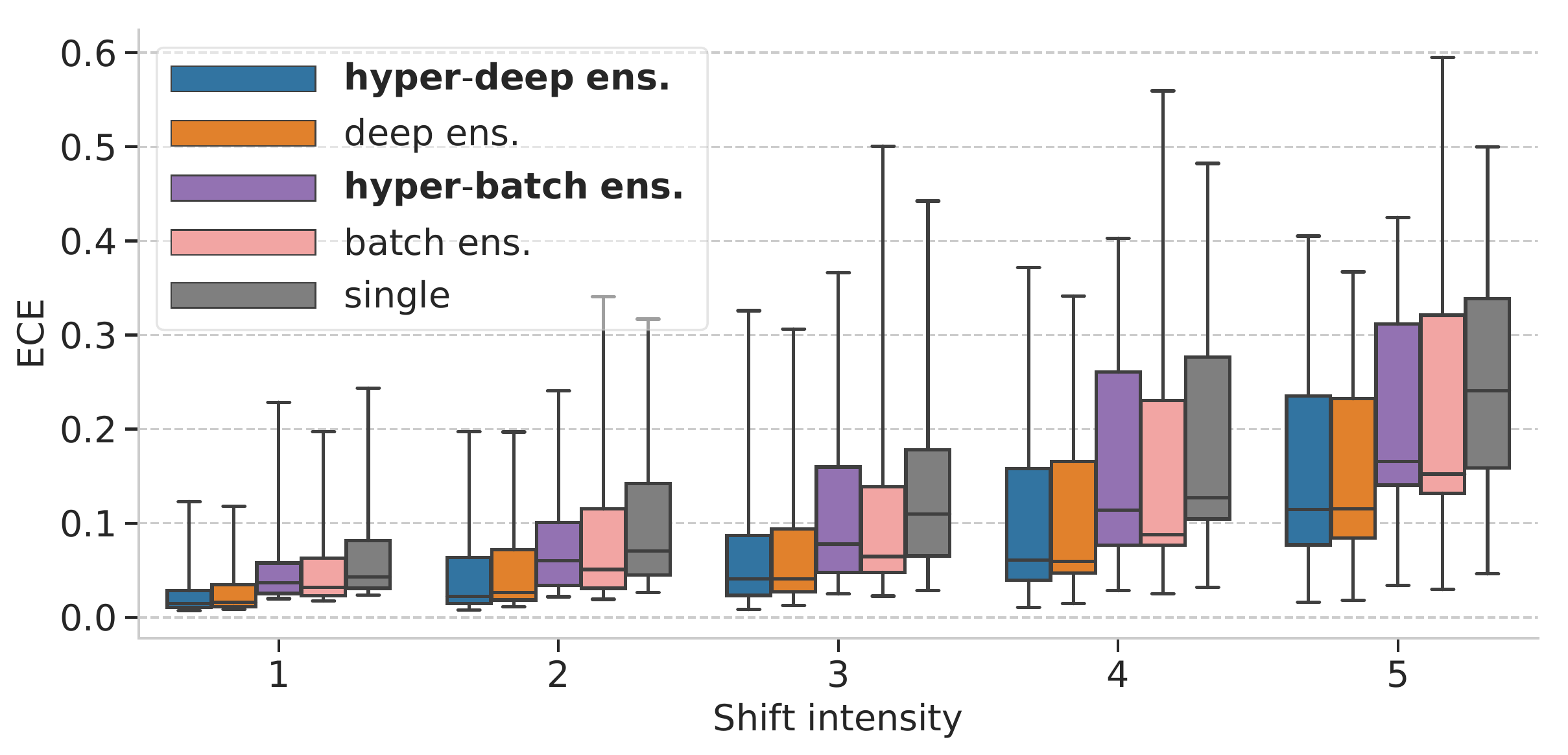}
\includegraphics[scale=0.75]{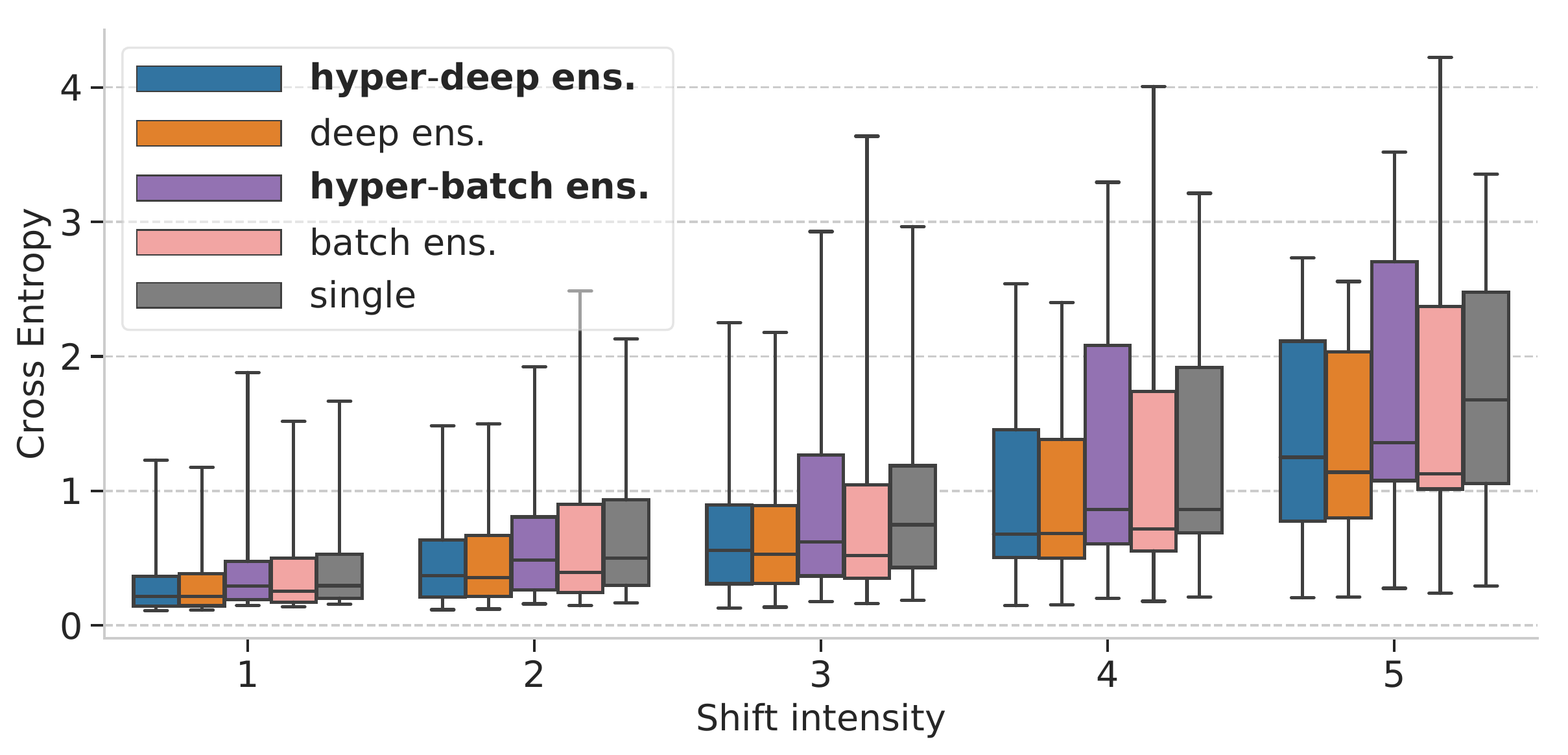}
}
\vspace{-0.5cm}%
\caption{CIFAR-10. Additional plots for calibration on CIFAR-10 corruptions. The boxplots show a comparison of expected calibration error and cross entropy (negative log-likelihood) on different levels of corruption. Each box shows the quartiles summarizing the results across all types of skew while the error bars indicate the min and max across different skew types. The plot for accuracy under corruptions can be found in \Cref{fig:cifar10_ood}.}%
\label{fig:cifar10_ood_appendix}%
\vspace{-0.0cm}%
\end{figure}%

\subsection{Out-of-distribution experiments}
\label{sec:supp_results_full_odd}

In this section, we provide an out-of-distribution evaluation along the line of Table 1 in~\cite{hein2019relu}.
More precisely, for each of the four approaches \deepens{}, \strathyperens{}, \batchens{} and \batchhyperens{}, we compute on out-of-distribution samples from other image datasets the following metrics:
\begin{itemize}
    \item Mean maximum confidence (MMC) on out-distribution samples (lower is better)
    \item The AUC of the ROC curve (AUROC) for the task of discriminating between in- and out-distributions based on the confidence value (higher is better)
    \item The false positive rate at 95\% true positive rate (FPR@95) in the same discriminative task (lower is better).
\end{itemize}
We summarize the results in~\Cref{tab:supp_mat_results_full_odd}, where we consider models both trained on CIFAR-10 (with evaluation on CIFAR-100 and SVHN) and CIFAR-100 (with evaluation on CIFAR-10 and SVHN). In a nutshell, \strathyperens{} (respectively \batchhyperens{}) tends to favourably compare with \deepens{} (respectively \batchens{}) on CIFAR-10 and CIFAR-100, while they appear to perform worse over SVHN.

\begin{table}[t]
\caption{Out-of-distribution evaluation based on other image datasets. The table reports MMC ($\downarrow$)/ AUROC ($\uparrow$) / FPR@95 ($\downarrow$) (see the precise definitions of the metrics in the text).}%
\label{tab:supp_mat_results_full_odd}%
\centering
\vspace*{0.2cm}
\resizebox{\textwidth}{!}{%
\begin{tabular}{@{}lcc|cc@{}}
                & \multicolumn{2}{c|}{\textbf{Trained on CIFAR-100}} & \multicolumn{2}{c}{\textbf{Trained on CIFAR-10}} \\
                & CIFAR-10               & SVHN               & CIFAR-100              & SVHN              \\ \midrule
deep ens (4)      &        0.502 / 0.816 / 0.762               &      0.538 / 0.796 / 0.792             &          0.742 / 0.912 / 0.482             &           0.599 / 0.972 / 0.185        \\
{\color{MidnightBlue}hyper-deep ens (4)}   &           0.524 / 0.822 / 0.741             &           0.580 / 0.787 / 0.787        &          0.730 / 0.915 / 0.469             &       0.608 / 0.967 / 0.237            \\ \midrule
batch ens (4)       &            0.568 / 0.810 / 0.753          &        0.594 / 0.795 / 0.771            &          0.800 / 0.908 / 0.493              &          0.700 / 0.961 / 0.269         \\
{\color{MidnightBlue}hyper-batch ens (4)} &           0.544 / 0.814 / 0.748            &        0.553 / 0.813 / 0.753            &          0.746 / 0.907 / 0.519            &           0.675 / 0.951 / 0.364        \\
\end{tabular}%
}
\vspace*{-0.0cm}
\end{table}

\subsection{Complementary results for CIFAR-10}
\label{sec:supp_results_cifar10}
In this section we show complementary results to those presented in the main text for CIFAR-10. \Cref{fig:str_hyper_ens_cifar10} compares hyper-deep ensembles against deep ensembles for varying ensemble sizes. We find that the performance gain on CIFAR-10 is not as substantial as on CIFAR-100 presented in \Cref{fig:str_hyper_ens_cifar100}. However, \strathyperens{} improves upon \deepens{} for large ensemble sizes in terms of NLL (cross entropy) and expected calibration error (ECE). The accuracy of \strathyperens{} is slightly higher for most ensemble sizes (except for ensemble sizes 3 and 10).

\Cref{fig:cifar10_ood_appendix} shows a comparison of additional metrics on the out of distribution experiment presented in the main text, \Cref{fig:cifar10_ood}. We observe the same trend as in~\Cref{fig:cifar10_ood} that \batchhyperens{} is more robust than \batchens{} as it typically leads to smaller worst values (see top whiskers in the boxplot).

\begin{figure}[t]
\vspace{-0.0cm}
\resizebox{\textwidth}{!}{
\includegraphics[scale=0.75]{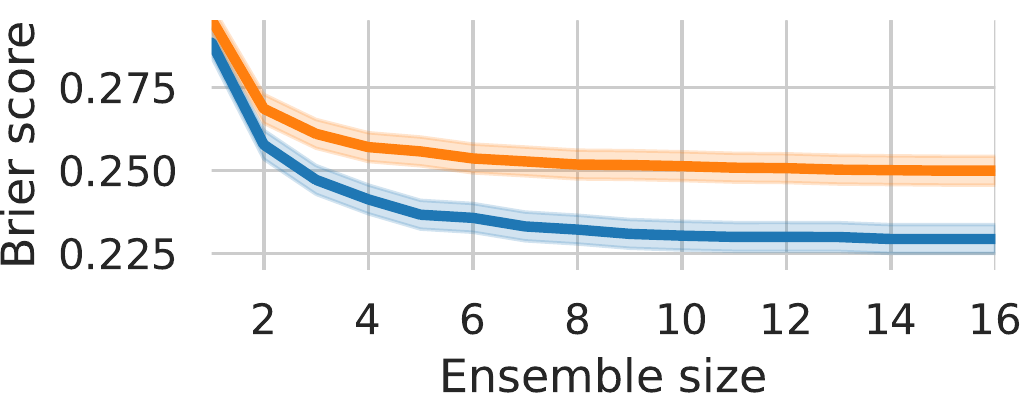}
\includegraphics[scale=0.75]{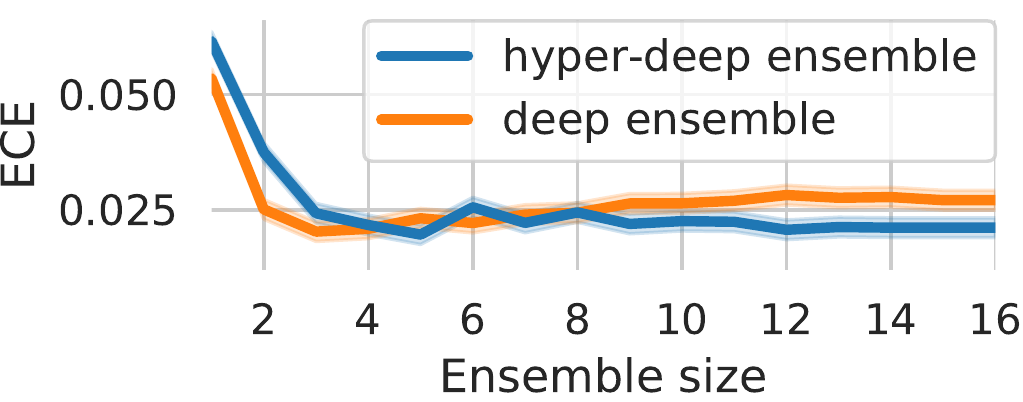}
}
\vspace{-0.5cm}%
\caption{CIFAR-100. Comparison of our hyper-deep ensemble with deep ensemble, for different ensemble sizes, in terms of Brier score and expected calibration error for a Wide ResNet 28-10 over CIFAR-100. Plots for negative log-likelihood and accuracy can be found in the main text, in ~\Cref{fig:str_hyper_ens_cifar100}. }%
\label{fig:str_hyper_ens_cifar100_additional}%
\vspace{-0.0cm}%
\end{figure}%

\subsection{Complementary results for CIFAR-100}
\label{sec:supp_results_cifar100}
In this section we show complementary results to those presented in the main text for CIFAR-100.
\Cref{fig:str_hyper_ens_cifar100_additional} presents additional metrics (Brier score and expected calibration error) for varying ensemble sizes for hyper-deep ensemble and deep ensemble. Additionally to the strong improvements in terms of accuracy and NLL presented in \Cref{fig:str_hyper_ens_cifar100}, we find that \strathyperens{} also improves in terms of Brier score and but is slightly less calibrated than deep ensemble for large ensemble sizes.

\subsection{Memory and training time cost}
\label{sec:supp_time_and_memory}

For hyper-batch ensemble and batch ensemble, \Cref{tab:train_cost} reports the training time and memory cost in terms of number of parameters. Our method is roughly twice as costly as batch ensemble with respect to those two aspects. As demonstrated in the main text, this comes with the advantage of achieving better prediction performance. In \Cref{sec:supp_mat_deep_batch_ens} we show that doubling the number of parameters for batch ensemble still leads to worse performance than our method.

\setlength{\tabcolsep}{4pt}
\begin{table}[t]
\caption{Comparison of the numbers of parameters and training cost for \batchhyperens{} and \batchens{} for Wide ResNet 28-10.}
\label{tab:train_cost}
\begin{center}
{\small
\begin{tabular}{lcccc}
\toprule
 CIFAR-10&  Time/epoch. & total epochs. & total time & \# parameters \\ \midrule \midrule
\batchhyperens{} & 2.07 min.  & 300 & 10.4h  & 73.1M \\
\cmidrule(l){1-5}
\batchens{} & 1.01 min.  & 250 & 4.2h & 36.6M \\
 \toprule
 CIFAR-100 & & & & \\ \midrule \midrule
\batchhyperens{} & 2.16 min. & 300  & 10.8h & 73.2M \\
\cmidrule(l){1-5}
\batchens{} & 1.10 min. & 250 & 4.6h &  36.6M\\
\bottomrule
\end{tabular}%
}
\end{center}
\end{table}

\section{Towards more compact self-tuning layers}\label{sec:supp_mat_more_compact_stn_layers}

The goal of this section is to motivate the introduction of different, more compact parametrizations of the layers in self-tuning networks.

In~\cite{mackay2019self}, the choice of their parametrization (i.e., shifting and rescaling) is motivated by the example of ridge regression whose solution is viewed as a particular 2-layer linear network (see details in Section~B.2 of~\cite{mackay2019self}). The parametrization is however not justified for other losses beyond the square loss. Moreover, by construction, this parametrization leads to at least a 2x memory increase compared to using the corresponding standard layer.

If we take the example of the dense layer with input and output dimensions $r$ and $s$ respectively, recall that we have
\begin{equation*}
    \Wb + \Deltab \circ \eb(\lambdab),
    \ \text{with}\ \Wb, \Deltab \in \Real^{r \times s}.
\end{equation*}
Let us denote by $\zetab_j \in \{0,1\}^{s}$ the one-hot vector with its $j$-th entry equal to 1 and 0 elsewhere, $e_j(\lambdab)$ the $j$-th entry of $\eb(\lambdab)$ and $\deltab_j$ the $j$-th column of $\Deltab$. We can rewrite the above equation as
\begin{equation}\label{eq:supp_mat_rewrite_mackay_layer_as_linear_combination}
    \Wb + \sum_{j=1}^s e_j(\lambdab) \deltab_j \zetab_j^\top
    =\Wb + \sum_{j=1}^s e_j(\lambdab) \Wb_j
    =\sum_{j=0}^s e_j(\lambdab) \Wb_j
\ \text{with}\
e_0(\lambdab) = 1
\ \text{and}\
\Wb_0 = \Wb.
\end{equation}
As a result, we can re-interpret the parametrization of~\cite{mackay2019self} as a very specific linear combination of parameters $\Wb_j$ where the coefficients of the combination, i.e., $\eb(\lambdab)$, depend on $\lambdab$.

Based on this observation and insight, we want to further motivate the use of self-tuned layers with more general linear combinations (dependent on $\lambdab$) of parameters, paving the way for more \textit{compact} parametrizations. For instance, with $\Wb \in \Real^{r \times s}, \Gb \in \Real^{r \times h}$ and $\Hb \in \Real^{s \times h}$ as well as $\eb(\lambdab) \in \Real^h$, we could consider
\begin{equation}\label{eq:supp_mat_example_low_rank_dense}
    \Wb + \sum_{j=1}^h e_j(\lambdab) \gb_j \hb_j^\top = \Wb + (\Gb \circ \eb(\lambdab)) \Hb^\top.
\end{equation}
Formulation~(\ref{eq:supp_mat_example_low_rank_dense}) comes with two benefits.
On the one hand, it reduces the memory footprint, as controlled by the low-rank factor $h$ which impacts the size of both $(\Gb \circ \eb(\lambdab)) \Hb^\top$ and $\eb(\lambdab)$. On the other hand, we can hope to get more expressiveness and flexibility since in (\ref{eq:supp_mat_rewrite_mackay_layer_as_linear_combination}), only the $\deltab_j$'s are learned, while in (\ref{eq:supp_mat_example_low_rank_dense}), both the vector $\gb_j$'s and $\hb_j$'s are learned.     

\subsection{Problem statement}

Along the line of~\cite{mackay2019self}, but with a broader scope, beyond the ridge regression setting, we now provide theoretical arguments to justify the use of such a parametrization. We focus on the linear case with arbitrary convex loss functions. We start by recalling some notation, some of which slightly differ from the rest of the paper.
\paragraph{Notations.}
In the following derivations, we will use
\begin{itemize}
    \item Input point $\xb \in \Real^d$ with target $y$
    \item The distribution over pair $(\xb, y)$ is denoted by $\Pcal$
    \item Domain $\Lambda \subseteq \Real^{m+1}$ of $(m+1)$-dimensional hyperparameter $\lambdab=(\lambda_0, \lambdab_1) \in \Lambda$ (with $\lambdab_1$ of dimension $m$). We split the vector representation to make explicitly appear $\lambda_0$, the regularization parameter, for a reason that will be clear afterwards.
    \item Feature transformation of the input points $\phi: \Real^d \mapsto \Real^k$. When the feature transformation is itself parametrized by some hyperparameters $\lambdab_1$, we write $\phi_{\lambdab_1}(\xb)$.
    \item The distribution over hyperparameters $(\lambda_0, \lambdab_1)$ is denoted by $\Qcal$
    \item Embedding of the hyperparameters $\eb: \Lambda \mapsto \Real^q$
    \item The loss function $\hat{y} \mapsto \ell_{\lambdab_1}(y, \hat{y})$, potentially parameterized by some hyperparameters $\lambdab_1$.
\end{itemize}

We focus on the following formulation
\begin{equation}\label{eq:supp_mat_main_problem}
    \min_{\Ub \in \Real^{k \times q}} \Exp_{(\lambda_0, \lambdab_1) \sim \Qcal} \Big[
    \Exp_{(\xb,y) \sim \Pcal} \Big[ \ell_{\lambdab_1}(y, \phi_{\lambdab_1}(\xb)^\top \Ub \eb(\lambdab)) \Big] + \frac{\lambda_0}{2} \|\Ub \eb(\lambdab)\|^2
    \Big].
\end{equation}
Note the generality of (\ref{eq:supp_mat_main_problem}) where the hyperparameters sampled from $\Qcal$ influence the regularization term (via $\lambda_0$), the loss (via $\ell_{\lambdab_1}$) and the data representation (with $\phi_{\lambdab_1}$).

In a nutshell, we want to show that, for any $\lambdab \in \Lambda$, $\Ub \eb(\lambdab)$---i.e., a linear combination of parameters whose combination depends on $\lambdab$, as in~(\ref{eq:supp_mat_example_low_rank_dense})---can well approximate the solution $\wb(\lambdab)$ of 
\begin{equation*}
    \min_{\wb \in \Real^{k}}
    \Exp_{(\xb,y) \sim \Pcal} \Big[ \ell_{\lambdab_1}(y, \phi_{\lambdab_1}(\xb)^\top \wb) \Big] + \frac{\lambda_0}{2} \|\wb\|^2.
\end{equation*}
In Proposition~\ref{prop:supp_mat_approx}, we show that when we apply a stochastic optimization algorithm to~(\ref{eq:supp_mat_main_problem}), e.g., SGD or variants thereof, with solution $\hat{\Ub}$, it holds in expectation over $\lambdab\sim \Qcal$ that $\wb(\lambdab) \approx \hat{\Ub} \eb(\lambdab)$ under some appropriate assumptions.

Our analysis operates with a fixed feature transformation $\phi_{\lambdab_1}$ (e.g., a pre-trained network) and with a fixed embedding of the hyperparameters $\eb$ (e.g., a polynomial expansion). In practice, those two quantities would however be learnt simultaneously during training. We stress that, despite those two technical limitations, the proposed analysis is more general than that of~\cite{mackay2019self}, in terms of both the loss functions and the hyperparameters covered (in~\cite{mackay2019self}, only the squared loss and $\lambda_0$ are considered). 

We define (remembering the definition $\lambdab=(\lambda_0, \lambdab_1)$)
\begin{eqnarray*}
g_\lambdab(\wb) &=& \Exp_{(\xb,y) \sim \Pcal} \Big[ \ell_{\lambdab_1}(y, \phi_{\lambdab_1}(\xb)^\top \wb) \Big]\\
f_\lambdab(\Ub\eb(\lambdab)) &=& g_\lambdab(\Ub\eb(\lambdab)) + \frac{\lambda_0}{2} \|\Ub \eb(\lambdab)\|^2\\
F(\Ub) &=& \Exp_{\lambdab \sim \Qcal} \big[f_\lambdab(\Ub\eb(\lambdab))\big] = \Exp_{\lambdab \sim \Qcal} \big[g_\lambdab(\Ub\eb(\lambdab))\big] + \frac{1}{2}\trace(\Ub \Cb \Ub^\top)
\end{eqnarray*}

\subsection{Assumptions}

\begin{enumerate}
    \item[\textbf{(A1)}] For all $\lambdab \in \Lambda$, $g_\lambdab(\cdot)$ is convex and has $L_\lambdab$-Lipschitz continuous gradients.
    \item[\textbf{(A2)}] The matrices $\Cb = \Exp_{\lambdab \sim \Qcal}[\lambda_0 \eb(\lambdab) \eb(\lambdab)^\top]$ and $\Sigmab = \Exp_{\lambdab \sim \Qcal}[\eb(\lambdab) \eb(\lambdab)^\top]$ are positive definite.
\end{enumerate}

\subsection{Direct consequences}
Under the assumptions above, we have the following properties:
\begin{itemize}
    \item For all $\lambdab \in \Lambda$, the problem $$\min_{\wb\in \Real^k} \Big\{g_\lambdab(\wb) + \frac{\lambda_0}{2} \|\wb\|^2\Big\}$$ admits a unique solution which we denote by $\wb(\lambdab)$. Moreover, it holds that
\begin{equation}\label{eq:supp_mat_opt_condition_w}
    \nabla g_\lambdab(\wb(\lambdab)) + \lambda_0 \wb(\lambdab) = \zerob
\end{equation}
    \item $F(\cdot)$ is strongly convex ($\Cb \succ \zerob$) and the problem
    $$
    \min_{\Ub \in \Real^{k \times q}} F(\Ub)
    $$
    admits a unique solution which we denote by $\Ub^\star$.
\end{itemize}

\subsection{Preliminary lemmas}
Before listing some lemmas, we define for any $\lambdab \in \Lambda$ and any $\Deltab(\lambdab) \in \Real^k$
\begin{equation*}
R(\Deltab(\lambdab)) = g_\lambdab(\Deltab(\lambdab) + \wb(\lambdab)) - g_\lambdab(\wb(\lambdab)) - 
\Deltab(\lambdab)^\top \nabla g_\lambdab(\wb(\lambdab)) 
\end{equation*}
which is the residual of the first-order Taylor expansion of $g_\lambdab(\cdot)$ at $\wb(\lambdab)$.
Given Assumption \textbf{(A1)}, it notably holds that
\begin{equation}\label{eq:supp_mat_inequality_residual}
0 \leq R(\Delta(\lambdab)) \leq \frac{L_\lambdab}{2} \| \Deltab(\lambdab) \|_2^2.
\end{equation}
\begin{lemma}
We have for any $\Ub \in \Real^{k \times q}$ and any $\lambdab \in \Lambda$, with $\Delta(\lambdab)=\Ub \eb(\lambdab) - \wb(\lambdab)$, 
\begin{eqnarray*}
f_\lambdab(\Ub \eb(\lambdab)) &=& g_\lambdab(\Ub\eb(\lambdab)) + \frac{\lambda_0}{2} \|\Ub \eb(\lambdab)\|^2\\
 &=& g_\lambdab(\Deltab(\lambdab) + \wb(\lambdab)) + \frac{\lambda_0}{2} \|\Deltab(\lambdab) + \wb(\lambdab)\|^2\\
 &=& f_\lambdab(\wb(\lambda)) + R(\Deltab(\lambdab)) + \frac{\lambda_0}{2} \|\Deltab(\lambdab)\|^2 + \Deltab(\lambdab)^\top [\nabla g_\lambdab(\wb(\lambdab)) + \lambda_0\wb(\lambdab)]\\
&=& f_\lambdab(\wb(\lambda)) + R(\Deltab(\lambdab)) + \frac{\lambda_0}{2} \|\Deltab(\lambdab)\|^2 
\end{eqnarray*}
where in the last line we have used the optimality condition (\ref{eq:supp_mat_opt_condition_w}) of $\wb(\lambdab)$.
\end{lemma}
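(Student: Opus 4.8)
The plan is to treat this as a pure algebraic identity: every object here is defined explicitly, so no analysis is required beyond expanding terms and invoking the stationarity of $\wb(\lambdab)$. First I would set $\Deltab(\lambdab) = \Ub\eb(\lambdab) - \wb(\lambdab)$ and rewrite $\Ub\eb(\lambdab) = \Deltab(\lambdab) + \wb(\lambdab)$, substituting this into both the loss term $g_\lambdab(\Ub\eb(\lambdab))$ and the quadratic penalty $\frac{\lambda_0}{2}\|\Ub\eb(\lambdab)\|^2$ that make up $f_\lambdab$.

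Next I would expand the two pieces separately. For the loss term, the definition of $R(\Deltab(\lambdab))$ gives directly $g_\lambdab(\Deltab(\lambdab)+\wb(\lambdab)) = g_\lambdab(\wb(\lambdab)) + \Deltab(\lambdab)^\top \nabla g_\lambdab(\wb(\lambdab)) + R(\Deltab(\lambdab))$. For the penalty, a standard expansion of the squared norm yields $\frac{\lambda_0}{2}\|\Deltab(\lambdab)+\wb(\lambdab)\|^2 = \frac{\lambda_0}{2}\|\wb(\lambdab)\|^2 + \lambda_0\,\Deltab(\lambdab)^\top\wb(\lambdab) + \frac{\lambda_0}{2}\|\Deltab(\lambdab)\|^2$. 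Adding the two, I would group the terms depending only on $\wb(\lambdab)$ into $g_\lambdab(\wb(\lambdab)) + \frac{\lambda_0}{2}\|\wb(\lambdab)\|^2 = f_\lambdab(\wb(\lambdab))$, keep the pure remainder $R(\Deltab(\lambdab)) + \frac{\lambda_0}{2}\|\Deltab(\lambdab)\|^2$, and collect the two contributions linear in $\Deltab(\lambdab)$ into the single expression $\Deltab(\lambdab)^\top[\nabla g_\lambdab(\wb(\lambdab)) + \lambda_0\wb(\lambdab)]$.

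The final step, and the only one carrying any content, is to observe that this linear term vanishes identically: by the first-order optimality condition~(\ref{eq:supp_mat_opt_condition_w}) satisfied by the minimizer $\wb(\lambdab)$, we have $\nabla g_\lambdab(\wb(\lambdab)) + \lambda_0\wb(\lambdab) = \zerob$, so the bracket is zero regardless of $\Deltab(\lambdab)$. This leaves exactly $f_\lambdab(\Ub\eb(\lambdab)) = f_\lambdab(\wb(\lambdab)) + R(\Deltab(\lambdab)) + \frac{\lambda_0}{2}\|\Deltab(\lambdab)\|^2$, which is the claim. I do not expect a genuine obstacle here; the computation is routine, and the only point requiring care is the bookkeeping of the cross terms, so that the linear-in-$\Deltab(\lambdab)$ contributions arising from the Taylor residual and from the norm expansion are correctly recognized as $\Deltab(\lambdab)^\top[\nabla g_\lambdab(\wb(\lambdab)) + \lambda_0\wb(\lambdab)]$ before the optimality condition is applied. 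Note that the residual bound~(\ref{eq:supp_mat_inequality_residual}) plays no role in the identity itself; it will instead be used downstream in Proposition~\ref{prop:supp_mat_approx}.
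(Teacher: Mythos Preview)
Your proposal is correct and follows exactly the same approach as the paper: the lemma's chain of equalities \emph{is} the proof, and you have spelled out each step (substitution, Taylor-residual expansion of $g_\lambdab$, binomial expansion of the squared norm, regrouping into $f_\lambdab(\wb(\lambdab))$, and cancellation of the linear term via the optimality condition~(\ref{eq:supp_mat_opt_condition_w})) in the same order and with the same grouping as the displayed computation. Your remark that~(\ref{eq:supp_mat_inequality_residual}) is unused here and only enters in Proposition~\ref{prop:supp_mat_approx} is also accurate.
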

As a direct consequence, we have the following result:
\begin{lemma}\label{lem:FU1_leq_FU2}
For any $\Ub_1, \Ub_2 \in \Real^{k \times q}$ and defining for any $\lambdab \in \Lambda$, with $\Deltab_j(\lambdab)=\Ub_j \eb(\lambdab) - \wb(\lambdab)$, it holds that
$$
F(\Ub_1) \leq F(\Ub_2)
$$
if and only if
$$
\Exp_{\lambdab \sim \Qcal} \big[ R(\Deltab_1(\lambdab)) + \frac{\lambda_0}{2} \|\Deltab_1(\lambdab)\|^2 \big]
\leq
\Exp_{\lambdab \sim \Qcal} \big[ R(\Deltab_2(\lambdab)) + \frac{\lambda_0}{2} \|\Deltab_2(\lambdab)\|^2 \big].
$$
\end{lemma}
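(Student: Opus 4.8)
The plan is to obtain the claim as an immediate corollary of the preceding lemma, with essentially no new computation. First I would apply that lemma separately to each of the two matrices. For a fixed $\lambdab \in \Lambda$ and $j \in \{1,2\}$, setting $\Deltab_j(\lambdab) = \Ub_j \eb(\lambdab) - \wb(\lambdab)$, the preceding lemma gives the pointwise identity
\begin{equation*}
f_\lambdab(\Ub_j \eb(\lambdab)) = f_\lambdab(\wb(\lambdab)) + R(\Deltab_j(\lambdab)) + \frac{\lambda_0}{2} \|\Deltab_j(\lambdab)\|^2 .
\end{equation*}

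Next I would integrate both sides against $\Qcal$ and use the definition $F(\Ub_j) = \Exp_{\lambdab \sim \Qcal}[f_\lambdab(\Ub_j \eb(\lambdab))]$. The key structural observation is that the leading term on the right, $\Exp_{\lambdab \sim \Qcal}[f_\lambdab(\wb(\lambdab))]$, depends only on the per-$\lambdab$ optimizer $\wb(\lambdab)$ and \emph{not} on $\Ub_j$; call this constant $c$. Writing $G(\Ub) = \Exp_{\lambdab \sim \Qcal}\big[R(\Deltab(\lambdab)) + \tfrac{\lambda_0}{2}\|\Deltab(\lambdab)\|^2\big]$ with $\Deltab(\lambdab) = \Ub \eb(\lambdab) - \wb(\lambdab)$, I would then record that $F(\Ub_j) = c + G(\Ub_j)$ for $j \in \{1,2\}$. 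Subtracting the two instances, the common constant cancels, yielding $F(\Ub_1) - F(\Ub_2) = G(\Ub_1) - G(\Ub_2)$, from which the stated equivalence $F(\Ub_1) \le F(\Ub_2) \iff G(\Ub_1) \le G(\Ub_2)$ is read off directly. This is exactly the asserted "if and only if".

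There is no genuine obstacle here; the argument is a one-line consequence of the earlier identity. The only point deserving a word of care is well-definedness: cancelling the constant $c$ is legitimate provided the relevant expectations are finite. Here Assumption \textbf{(A1)} (convexity and Lipschitz-continuous gradients) together with the elementary bound~(\ref{eq:supp_mat_inequality_residual}), $0 \le R(\Deltab(\lambdab)) \le \tfrac{L_\lambdab}{2}\|\Deltab(\lambdab)\|^2$, guarantees that $R(\Deltab_j(\lambdab))$ and $\tfrac{\lambda_0}{2}\|\Deltab_j(\lambdab)\|^2$ are nonnegative and integrable under $\Qcal$, so $G(\Ub_j)$ is finite and the subtraction is valid. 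I would state these finiteness remarks briefly and conclude.

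Because the heavy lifting was already done in the preceding lemma (the exact decomposition of $f_\lambdab(\Ub\eb(\lambdab))$ via the optimality condition~(\ref{eq:supp_mat_opt_condition_w})), the proof amounts to taking expectations and discarding the $\Ub$-independent term. Its purpose is mainly to isolate $G(\cdot)$ as the object that governs the ordering of $F$-values, which is presumably the quantity controlled in the subsequent approximation result (Proposition~\ref{prop:supp_mat_approx}).
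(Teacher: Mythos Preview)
Your proposal is correct and matches the paper's own treatment: the paper states this lemma as ``a direct consequence'' of the preceding identity without further proof, and your argument---apply that identity to each $\Ub_j$, take expectations, and cancel the $\Ub$-independent term $\Exp_{\lambdab\sim\Qcal}[f_\lambdab(\wb(\lambdab))]$---is exactly the intended one-line deduction. A minor remark: your integrability justification is not quite complete (the bound $R(\Deltab)\le \tfrac{L_\lambdab}{2}\|\Deltab\|^2$ yields finiteness only under additional moment assumptions on $L_\lambdab$ and $\eb(\lambdab)$ that the paper never states), but the paper itself glosses over this point, so your treatment is already more careful than the original.
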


\subsection{Main proposition}
Before presenting the main result, we introduce a key quantity that will drive the quality of our guarantee.
To measure how well we can approximate the family of solutions $\{\wb(\lambdab)\}_{\lambdab \in \Lambda}$ via the choice of $\eb$ and $\Qcal$, we define
\begin{equation*}
    \Ub_\text{app} = \argmin_{\Ub \in \Real^{k \times q}} 
    \Exp_{\lambdab \sim \Qcal} \big[ \| \Ub \eb(\lambdab) - \wb(\lambdab)  \|_2\big]
    \quad\text{and}\quad
    {\Deltab}_\text{app}(\lambdab)={\Ub}_\text{app} \eb(\lambdab) - \wb(\lambdab).
\end{equation*}
The definition is unique since according to \textbf{(A2)}, we have $\Sigmab \succ \zerob$.
\begin{proposition}\label{prop:supp_mat_approx}
Let assume we have an, possibly stochastic, algorithm $\Acal$ such that after $t$ steps of $\Acal$ to optimize~(\ref{eq:supp_mat_main_problem}), we obtain $\Ub_t$ satisfying
$$
\Exp_\Acal[F(\Ub_t)] \leq F(\Ub^\star) + \varepsilon_t^\Acal
$$
for some tolerance $\varepsilon_t^\Acal \geq 0$ depending on both $t$ and the algorithm $\Acal$. Denoting by $\Deltab_t(\lambdab)=\Ub_t \eb(\lambdab) - \wb(\lambdab)$ the gap between the estimated and actual solution $\wb(\lambdab)$ for any $\lambdab \in \Lambda$, 
it holds that
$$
\Exp_{\Acal,\ \lambdab\sim\Qcal} \Big[ R(\Deltab_t(\lambdab)) + \frac{\lambda_0}{2} \|\Deltab_t(\lambdab) \|^2 \Big]
\leq
\Exp_{\lambdab \sim \Qcal} \Big[ R(\Deltab_\textup{app}(\lambdab)) + \frac{\lambda_0}{2}  \|\Deltab_\textup{app}(\lambdab)\|^2 \Big] + \varepsilon_t^\Acal.
$$
In particular, we have:
$$
\Exp_{\Acal,\ \lambdab\sim\Qcal} \Big[ \lambda_0 \|\Ub_t \eb(\lambdab) - \wb(\lambdab) \|^2 \Big]
\leq
\Exp_{\lambdab \sim \Qcal} \Big[ (L_\lambdab + \lambda_0) \|\Deltab_\textup{app}(\lambdab)\|^2 \Big] + \varepsilon_t^\Acal.
$$
\end{proposition}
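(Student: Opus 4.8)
The plan is to translate the given optimization guarantee on $F$ into a statement about the approximation gaps $\Deltab_t(\lambdab)$ by exploiting the exact decomposition of $f_\lambdab$ furnished by the first preliminary lemma. That lemma shows that for $\Deltab(\lambdab)=\Ub\eb(\lambdab)-\wb(\lambdab)$ one has $f_\lambdab(\Ub\eb(\lambdab)) = f_\lambdab(\wb(\lambdab)) + R(\Deltab(\lambdab)) + \tfrac{\lambda_0}{2}\|\Deltab(\lambdab)\|^2$. Averaging over $\lambdab\sim\Qcal$ yields $F(\Ub) = C + \Exp_{\lambdab\sim\Qcal}[R(\Deltab(\lambdab)) + \tfrac{\lambda_0}{2}\|\Deltab(\lambdab)\|^2]$, where $C=\Exp_{\lambdab\sim\Qcal}[f_\lambdab(\wb(\lambdab))]$ does not depend on $\Ub$. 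The whole point of this rewriting is that minimizing $F$ is equivalent to minimizing the expected penalized residual, so any suboptimality bound on $F$ becomes, after subtracting the common constant $C$, a suboptimality bound on that residual.

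First I would apply this identity with $\Ub=\Ub_t$, take the expectation over the randomness of $\Acal$, and insert it into the hypothesis $\Exp_\Acal[F(\Ub_t)]\le F(\Ub^\star)+\varepsilon_t^\Acal$. Writing $F(\Ub^\star)$ through the same identity (with $\Deltab^\star(\lambdab)=\Ub^\star\eb(\lambdab)-\wb(\lambdab)$) and cancelling $C$ on both sides gives $\Exp_{\Acal,\lambdab\sim\Qcal}[R(\Deltab_t(\lambdab))+\tfrac{\lambda_0}{2}\|\Deltab_t(\lambdab)\|^2]\le \Exp_{\lambdab\sim\Qcal}[R(\Deltab^\star(\lambdab))+\tfrac{\lambda_0}{2}\|\Deltab^\star(\lambdab)\|^2]+\varepsilon_t^\Acal$. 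Next I would replace $\Ub^\star$ by $\Ub_\textup{app}$: since $\Ub^\star$ is the global minimizer of $F$ (unique under \textbf{(A2)} because $\Cb\succ\zerob$), we have $F(\Ub^\star)\le F(\Ub_\textup{app})$, and Lemma~\ref{lem:FU1_leq_FU2} converts this into the corresponding inequality between the expected penalized residuals of $\Deltab^\star$ and $\Deltab_\textup{app}$. Chaining the two inequalities produces the first displayed bound of the proposition.

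For the ``in particular'' statement, I would sandwich the first bound using the two-sided control of the residual in~(\ref{eq:supp_mat_inequality_residual}). On the left I drop the nonnegative term $R(\Deltab_t(\lambdab))\ge 0$, retaining only $\tfrac{\lambda_0}{2}\|\Deltab_t(\lambdab)\|^2$; on the right I upper bound $R(\Deltab_\textup{app}(\lambdab))\le \tfrac{L_\lambdab}{2}\|\Deltab_\textup{app}(\lambdab)\|^2$, so the bracket collapses to $\tfrac{L_\lambdab+\lambda_0}{2}\|\Deltab_\textup{app}(\lambdab)\|^2$. Rearranging and clearing the common factor $\tfrac12$ delivers $\Exp_{\Acal,\lambdab\sim\Qcal}[\lambda_0\|\Ub_t\eb(\lambdab)-\wb(\lambdab)\|^2]\le \Exp_{\lambdab\sim\Qcal}[(L_\lambdab+\lambda_0)\|\Deltab_\textup{app}(\lambdab)\|^2]+\varepsilon_t^\Acal$ (the absolute constant introduced by clearing the $\tfrac12$ can be absorbed into the tolerance $\varepsilon_t^\Acal$).

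The argument is essentially bookkeeping once the two preliminary lemmas are in hand, so I do not anticipate a genuine analytic obstacle; the only step requiring care is the substitution of $\Ub_\textup{app}$ for $\Ub^\star$, which is precisely where the optimality of $\Ub^\star$ together with Lemma~\ref{lem:FU1_leq_FU2} are invoked, and where assumption \textbf{(A2)} guarantees that both $\Ub^\star$ and $\Ub_\textup{app}$ are well defined.
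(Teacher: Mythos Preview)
Your argument is correct and follows essentially the same route as the paper: apply the decomposition $f_\lambdab(\Ub\eb(\lambdab))=f_\lambdab(\wb(\lambdab))+R(\Deltab(\lambdab))+\tfrac{\lambda_0}{2}\|\Deltab(\lambdab)\|^2$ to rewrite the hypothesis, use the optimality $F(\Ub^\star)\le F(\Ub_\textup{app})$ together with Lemma~\ref{lem:FU1_leq_FU2} to pass from $\Deltab^\star$ to $\Deltab_\textup{app}$, and then invoke~(\ref{eq:supp_mat_inequality_residual}) for the corollary. Your observation about the factor $2$ on $\varepsilon_t^\Acal$ when clearing the $\tfrac12$ is accurate; the paper's proof simply writes ``stems from the application of~(\ref{eq:supp_mat_inequality_residual})'' and does not address this constant either, so the displayed ``in particular'' bound should strictly read $2\varepsilon_t^\Acal$ rather than $\varepsilon_t^\Acal$.
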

\begin{proof}
Starting from 
$$
\Exp_\Acal[F(\Ub_t)] \leq F(\Ub^\star) + \varepsilon_t^\Acal
$$
and applying Lemma~\ref{lem:FU1_leq_FU2}, we end up with (the expectation $\Exp_{\Acal}$ does not impact the result of Lemma~\ref{lem:FU1_leq_FU2} since the term $f_\lambdab(\wb(\lambdab))$ that cancels out on both sides is not affected by $\Acal$)
$$
\Exp_{\Acal,\ \lambdab\sim\Qcal} \big[ R(\Deltab_t(\lambdab)) + \frac{\lambda_0}{2} \|\Deltab_t(\lambdab)\|^2 \big]
\leq
\Exp_{\lambdab \sim \Qcal} \big[ R(\Deltab^\star(\lambdab)) + \frac{\lambda_0}{2} \|\Deltab^\star(\lambdab)\|^2 \big] + \varepsilon_t^\Acal.
$$
Similarly, by definition of $\Ub^\star$ as the minimum of $F(\cdot)$, we have
$$
F(\Ub^\star) \leq F(\Ub_\textup{app})
$$
which leads to
$$
\Exp_{\lambdab\sim\Qcal} \big[ R(\Deltab^\star(\lambdab)) + \frac{\lambda_0}{2} \|\Deltab^\star(\lambdab)\|^2 \big]
\leq
\Exp_{\lambdab \sim \Qcal} \big[ R(\Deltab_\textup{app}(\lambdab)) + \frac{\lambda_0}{2} \|\Deltab_\textup{app}(\lambdab)\|^2 \big].
$$
Chaining the two inequalities leads to the first conclusion. The second conclusion stems from the application of~(\ref{eq:supp_mat_inequality_residual}).
\end{proof}

\end{document}